\theoremstyle{plain}
\newtheorem{theo}{Theorem}
\newtheorem{prop}{Proposition}
\newtheorem{lemm}{Lemma}
\newtheorem{coro}{Corollary}
\newtheorem{assump}{Assumption}
\theoremstyle{definition}
\newtheorem{remark}{Remark}
\def\0{\bm{0}}
\def\1{\bm{1}}
\def\2{\bm{2}}
\def\3{\bm{3}}
\def\4{\bm{4}}
\def\5{\bm{5}}
\def\6{\bm{6}}
\def\7{\bm{7}}
\def\8{\bm{8}}
\def\9{\bm{9}}
\def\a{\bm{a}}
\def\e{\bm{e}}
\def\f{\bm{f}}
\def\g{\bm{g}}
\def\k{\bm{k}}
\def\n{\bm{n}}
\def\p{\bm{p}}
\def\s{\bm{s}}
\def\t{\bm{t}}
\def\v{\bm{v}}
\def\x{\bm{x}}
\def\A{\bm{A}}
\def\B{\bm{B}}
\def\C{\bm{C}}
\def\D{\bm{D}}
\def\F{\bm{F}}
\def\G{\bm{G}}
\def\I{\bm{I}}
\def\K{\bm{K}}
\def\L{\bm{L}}
\def\N{\bm{N}}
\def\P{\bm{P}}
\def\Q{\bm{Q}}
\def\S{\bm{S}}
\def\U{\bm{U}}
\def\V{\bm{V}}
\def\W{\bm{W}}
\def\IC{\mathcal{I}}
\def\SC{\mathcal{S}}
\def\Pib{\mbox{\bm{$\Pi$}}}
\def\Sigmab{\mbox{\bm{$\Sigma$}}}
\def\Lambdab{\mbox{\bm{$\Lambda$}}}
\def\Real{\mbox{$\mathbb{R}$}}
\def\mmin{\mbox{\scriptsize min}}
\def\mmax{\mbox{\scriptsize max}}
\def\widebar{\accentset{{\cc@style\underline{\mskip10mu}}}}
\def\Widebar{\accentset{{\cc@style\underline{\mskip8mu}}}}
\def\wb{\widebar}
\def\wh{\widehat}
\def\wt{\widetilde}
\def\equivSym{\Leftrightarrow}
\newcommand{\by}[2]{$#1$-by-$#2$}
\def\MVEE{\mbox{$\mathsf{P}$}}
\def\transMVEE{\mbox{$\mathsf{Q}$}}
\title{
Robustness Analysis of Preconditioned Successive Projection Algorithm
for General Form of Separable NMF Problem
}
\author{Tomohiko Mizutani 
\thanks{
Department of Industrial Engineering and Management, 
Tokyo Institute of Technology,
2-12-1-W9-69, Ookayama, Meguro, Tokyo, 152-8552, Japan. 
{\tt mizutani.t.ab@m.titech.ac.jp}}}
\date{\today}
\begin{document}

\maketitle

\begin{abstract}
The successive projection algorithm (SPA) has been known to work well 
for separable nonnegative matrix factorization (NMF)
problems arising in applications, 
such as topic extraction from documents and endmember detection in hyperspectral images.
One of the reasons is in that the algorithm is robust to noise.
Gillis and Vavasis showed in 
[SIAM J. Optim., 25(1), pp.~677--698, 2015]
that a preconditioner can further 
enhance its noise robustness.
The proof rested on the condition that the dimension $d$ and factorization rank $r$
in the separable NMF problem coincide with each other.
However, it may be unrealistic to expect that the condition holds 
in separable NMF problems appearing in actual applications;
in such problems, $d$ is usually greater than $r$.
This paper shows, without the condition $d=r$, that the preconditioned SPA is robust to noise.

\bigskip \noindent
{\bf Keywords:} nonnegative matrix factorization, separability, successive projection algorithm, 
 robustness to noise, preconditioning

\medskip \noindent
{\bf AMS classification: } 15A23, 15A12, 65F30, 90C25 
\end{abstract}

\section{Introduction}
\label{Sec: intro}

A \by{d}{m} nonnegative matrix $\A$
is said to be {\it separable} if 
it has a decomposition of the form
\begin{equation} \label{Eq: separable matrix}
\A = \F\W \ \mbox{for} \ \F \in \Real^{d \times r}_+ \ 
 \mbox{and} \ \W = (\I, \K)\Pib \in \Real^{r \times m}_+
\end{equation}
where $\I$ is an \by{r}{r} identity matrix, $\K$ is an \by{r}{(m-r)} nonnegative matrix, and 
$\Pib$ is an \by{m}{m} permutation matrix.
Here, we call $\F$ the {\it basis matrix} of $\A$
and $r$  the {\it factorization rank}.
The separable nonnegative matrix factorization problem is stated as follows.
\begin{quote}
 {\bf (Separable NMF Problem}) \
 Let $\A$ be of the form given in (\ref{Eq: separable matrix}).
 Find an index set $\IC$ with $r$ elements such that $\A(\IC)$ coincides with the basis matrix $\F$.
\end{quote}
The notation $\A(\IC)$ denotes the submatrix of $\A$ whose column indices are in $\IC$; 
in other words, $\A(\IC) = (\a_i : i \in \IC)$ for the $i$th column vector $\a_i$ of $\A$.
We use the abbreviation NMF to refer to nonnegative matrix factorization.
The problem above can be thought of as a special case of  NMF problem.
The NMF problem is intractable, and in fact, was shown to be NP-hard in \cite{Vav09}.
The authors of \cite{Aro12a} proposed to put an assumption, called separability.
The separability assumption turns it into a tractable problem referred to as a separable NMF problem.
Although the assumption may restrict the range of applications,
it is known that separable NMF problems nonetheless can be used 
for the purpose of topic extraction from documents \cite{Aro12b, Aro13, Miz14}
and endmember detection in hyperspectral images \cite{Gil14, Gil15}.

Several algorithms have been developed for solving the separable NMF problem.
One of our concerns is how robust these algorithms are to noise,
since it is reasonable to suppose that the separable matrix contains noise 
in separable NMF problems arising from the applications mentioned above.
We consider an algorithm for solving a separable NMF problem
and suppose that the separable matrix contains noise.
If the algorithm can identify a matrix close to the basis matrix,
we say that it is {\it robust to noise}.

The successive projection algorithm (SPA) 
was originally proposed in \cite{Ara01} in the context of chemometrics.
Currently, the algorithm and its variants are used for 
topic modeling, document clustering and hyperspectral image unmixing.
Gillis and Vavasis showed in \cite{Gil14} that SPA is robust to noise
and presented empirical results  suggesting that the algorithm 
is a promising approach to hyperspectral image unmixing.
The theoretical results implied that
further improvement in noise robustness can be expected
if we can make the condition number of the basis matrix smaller.
Hence, they proposed in \cite{Gil15} to use a preconditioning matrix
for reducing the condition number of the basis matrix.
They showed that the noise robustness of SPA is improved 
by using a preconditioner.
The proof rested on the condition that the dimension $d$ and factorization rank $r$ 
in a separable matrix coincide with each other.
However, it may be unrealistic to expect that the condition holds 
in separable NMF problems derived from actual applications.
In such a situation,  $d$ is usually greater than $r$. 
For instance, we shall consider extraction of topics from a collection 
of newspaper articles.
This task can be modeled as a separable NMF problem.
In the problem, the dimension $d$ and factorization rank $r$ of a separable matrix 
correspond to the number of articles and topics, respectively. 
It would be rare that $d$ is close to $r$ but usual that $d$ is greater than $r$.

The aim of this paper is to show, without the condition $d = r$, 
that the preconditioned SPA is robust to noise.
The statement of our result is  in Theorem \ref{Theo: main}. 
It can be used as a guide for seeing how robust 
the algorithm is to noise 
when handling separable NMF problems derived from actual applications.

The rest of this paper is organized as follows.
In Section \ref{Sec: noise robustness of precond SPA},
we review SPA and the preconditioned one
and  describe the results of the noise robustness 
obtained in a series of studies by Gillis and Vavasis.
After that, we describe our analysis of the preconditioned SPA,
comparing our results with those of Gillis and Vavasis. 
Our analysis is shown in Section \ref{Sec: analysis}.

\subsection{Notation and Terminology} \label{Subsec: Notation}
A real matrix is said to be  {\it nonnegative} if all of its elements are nonnegative.
Here, we use the symbol $\Real^{d \times m}$ to represent the set of \by{d}{m} real matrices, 
and $\Real^{d \times m}_+$ the set of \by{d}{m} nonnegative matrices.
The identity matrix is denoted by $\IC$ and the permutation matrix by $\Pib$.
The vector of all ones is denoted by $\e$ and the $i$th unit vector by $\e_i$.
We shall use the capital upper-case letter $\A$ to denote a matrix.
The lower-case letter with subscript $\a_i$ indicates the $i$th column.
We denote the transpose by $\A^\top$, the rank by $\mbox{rank}(\A)$ and 
the matrix norm by $\|\A\|$.
In particular, the matrix $2$-norm and the Frobenius norm are written as $\|\A\|_2$ and $\|\A\|_F$.
We use the symbol $(\A; \B)$ for matrices $\A \in \Real^{d \times m}$ and $\B \in \Real^{d' \times m}$
to represent the matrix 
\begin{equation*}
 \left(
 \begin{array}{c}
  \A \\
  \B
 \end{array}
 \right) \in \Real^{(d + d') \times m},
\end{equation*}
and the symbol $\mbox{diag}(a_1, \ldots, a_m)$ for numbers $a_1, \ldots, a_m$ to represent
the diagonal matrix 
\begin{equation*}
 \left(
 \begin{array}{cccc}
  a_1 &        &  \\
      & \ddots &  \\
      &        & a_m
 \end{array}
 \right) \in \Real^{m \times m}.
\end{equation*}
For two real numbers $a$ and $b$, the symbol $\min(a,b)$ indicates the smaller value.

\subsection{Tools from Linear Algebra}
Any real and complex matrix has a singular value decomposition (SVD).
We will use the SVD of a real matrix in our subsequent discussion.
Let $\A \in \Real^{d \times m}$.
The SVD of $\A$ can be written as
\begin{equation} \label{Eq: SVD}
 \A = \U\Sigmab \V^\top.
\end{equation}
$\U$ and $\V$ are \by{d}{d} and \by{m}{m} orthogonal matrices.
In particular, the column vectors of $\U$ and $\V$ are called 
the {\it left singular vectors} and {\it right singular vectors} of $\A$.
$\Sigmab$ is a \by{d}{m} diagonal matrix.
If $d \le m$, it has the form $(\mbox{diag}(\sigma_1, \ldots, \sigma_d), \0)$ 
for a \by{d}{(m-d)} zero matrix $\0$;
otherwise, $(\mbox{diag}(\sigma_1, \ldots, \sigma_m); \0)$ 
for a \by{(d-m)}{m} zero matrix $\0$.
Let $t = \min(d,m)$.
It is known that the diagonal elements $\sigma_1, \ldots, \sigma_t$
are all nonnegative.  
These elements are called {\it singular values} of $\A$.
By changing the order of columns in $\U$ and $\V$,
we can arrange the singular values in descending order.
Therefore, throughout this paper, we always assume that $\sigma_1 \ge \cdots \ge \sigma_t$ in $\Sigmab$.
We use the symbols $\sigma_{\mmin}(\A)$ and $\sigma_{\mmax}(\A)$ to denote
the smallest and largest singular values among them; 
in other words, $\sigma_{\mmin}(\A) = \sigma_t$ and $\sigma_{\mmax}(\A) = \sigma_1$.
We define the {\it condition number} of $\A$
as $\sigma_{\mmax}(\A) / \sigma_{\mmin}(\A)$, and use $\kappa(\A)$ to denote it.

Let $\A$ be a \by{d}{d} symmetric positive definite matrix.
Due to the positive definiteness,
$\A$ has an eigenvalue decomposition such that $\A = \U \Lambdab \U^\top$
where $\U$ is a \by{d}{d} orthogonal matrix,
and $\Lambdab$ is a \by{d}{d} diagonal matrix
with positive  elements $\lambda_1, \ldots, \lambda_d$.
We define the square root of $\A$ as $\U \Lambdab^{1/2} \U^\top$ 
where $\Lambdab^{1/2} = \mbox{diag}(\lambda_1^{1/2}, \ldots, \lambda_d^{1/2})$, 
and use $\A^{1/2}$ to denote it.
We use the symbol $\A \succ \0$ to mean that $\A$ is positive definite.

\section{Noise Robustness of the Preconditioned SPA for Separable NMF Problems}
\label{Sec: noise robustness of precond SPA}

This section consists of three subsections.
We start by examining separable NMF problems from a geometric point of view
and summarize each step of SPA in Algorithm \ref{Alg: SPA}.
The geometric interpretation of the problems will help us to understand the notion behind SPA.
The analysis of the noise robustness of SPA by \cite{Gil14} is described 
in Theorem \ref{Theo: robustness of SPA}.
Next, assuming the condition $d=r$, 
we intuitively explain why the noise robustness of SPA can be enhanced 
by using a preconditioning matrix.
Then we describe the result 
for the preconditioned SPA by \cite{Gil15} in Theorem \ref{Theo: robustness of precond SPA}.
We summarize the algorithm of preconditioned SPA in Algorithm \ref{Alg: precond SPA},
which works without the condition $d=r$.
Finally, we describe our results on the robustness of Algorithm \ref{Alg: precond SPA}
to noise and compare them with the results of 
Theorem \ref{Theo: robustness of precond SPA} of \cite{Gil15}.

\subsection{Review of SPA} \label{Subsec: review of SPA}
Separable NMF problems have a geometric interpretation.
Let $\A$ be of the form given in (\ref{Eq: separable matrix}).
Without loss of generality,
we can assume that any column vector $\k_i$ of $\K$ satisfies $\e^\top \k_i = 1$,
since $\A = \F\W \equivSym \A\D_1  = \F\D_2 \D_2^{-1} \W \D_1$ 
for nonsingular diagonal matrices $\D_1$ and $\D_2$.
In addition, assume that $\mbox{rank}(\F) = r$.
Under these assumptions, the convex hull of the column vectors of $\A$ is 
an $(r-1)$-dimensional simplex in $\Real^d$, 
and the vertex corresponds to each column vector of $\F$.
Accordingly, we can restate the separable NMF problem as follows;
find all vertices of the convex hull of the column vectors of $\A$.
In \cite{Gil14, Gil15, Miz14},
we can find a further explanation of the problem.

SPA is designed on the basis of the geometric interpretation of separable NMF problems.
The first step finds $\a_{i^*}$ among the column vectors $\a_1, \ldots, \a_m$ of $\A$ 
that maximizes the convex function $f(\x) = \|\x\|_2^2$
and projects $\a_1, \ldots, \a_m$ onto the orthogonal space to $\a_{i^*}$.
This procedure is repeated until $r$ column vectors are found.
As pointed out in \cite{Gil14}, 
SPA has a connection to QR factorization with column pivoting by \cite{Bus65}.
Algorithm \ref{Alg: SPA} describes each step of SPA.
We may see why SPA can find $\F$ from $\A$
by recalling the following property;
given the set of points in a polytope, including all the vertices,
the maximum of a strongly convex function over the set is attained at one of the vertices.

\begin{algorithm}
 \caption{SPA}  \label{Alg: SPA}
 \textbf{Input:} 
 A \by{d}{m} real matrix $\A$ and a positive integer $r$. \\
 \textbf{Output:} 
 An index set $\IC$.
 \begin{enumerate}
  \item[\textbf{1:}] 
	       Initialize a matrix $\S$ as $\S \leftarrow \A$, 
	       and an index set $\IC$ as $\IC \leftarrow \emptyset$.

  \item[\textbf{2:}] 
	       Find an index $i^*$ such that $i^* = \arg \max_{i=1, \ldots, m} \|\s_i\|_2^2$
	       for the column vector $\s_i$ of $\S$.

  \item[\textbf{3:}] 
	       Set $\t \leftarrow \s_{i^*}$.
	       Update $\S$ as 
	       \begin{equation*}
		\S \leftarrow \biggl(\I - \frac{\t \t^\top }{\|\t\|_2^2}\biggr) \S,
	       \end{equation*}
	       and $\IC$ as 
	       \begin{equation*}
		\IC \leftarrow \IC \cup \{i^*\}.
	       \end{equation*}

  \item[\textbf{4:}]
	       Go back to step 2 if $|\IC| < r$; otherwise,  
	       output $\IC$, and then terminate.

 \end{enumerate}
\end{algorithm}

Now let us describe the analysis of Algorithm \ref{Alg: SPA} 
given by Gillis and Vavasis in \cite{Gil14}.
We put the following assumption on a \by{d}{m} real matrix $\A$.
\begin{assump} \label{Assump}
 $\A$ can be decomposed into $\A = \F \W$
 for $\F \in \Real^{d \times r}$ and $\W = (\I, \K)\Pib \in \Real^{r \times m}_+$
 where $\I$, $\K$ and $\Pib$ are the same as those of (\ref{Eq: separable matrix}).
 $\F$ and the column vector $\k_i$ of $\K$ satisfy the following conditions.
 \begin{enumerate}[{\normalfont (a)}]
  \item $\mbox{rank}(\F) = r$.
  \item $\e^\top \k_i \le 1$ for all $i = 1, \ldots, m-r$.
 \end{enumerate}
\end{assump}
Assumption \ref{Assump} corresponds to that made in \cite{Gil14, Gil15}.
Note that the decomposition of $\A$ in the assumption is not exactly the same as 
that of (\ref{Eq: separable matrix}).
$\F$ in (\ref{Eq: separable matrix}) is a nonnegative matrix but 
$\F$ in the assumption is not necessarily a nonnegative one.
As we mentioned in the first part of this section,
from the relation 
$\A = \F\W \equivSym \A\D_1  = \F\D_2 \D_2^{-1} \W \D_1$ for nonsingular diagonal matrices $\D_1$ and $\D_2$,
Assumption \ref{Assump}(b) can be assumed without loss of generality.
Under Assumption \ref{Assump}, Gillis and Vavasis showed in \cite{Gil14}
that Algorithm \ref{Alg: SPA} on the input $(\A, r)$ returns $\IC$ such that $\A(\IC) = \F$.
Furthermore, they showed that it is robust to noise.
Suppose that a separable matrix $\A$ of (\ref{Eq: separable matrix}) 
contains a noise matrix $\N \in \Real^{d \times m}$ 
such that 
\begin{equation} \label{Eq: near-separable matrix}
 \wt{\A} = \A + \N.
\end{equation}
We call $\wt{\A}$ a {\it near-separable matrix}, and $\N$ a {\it noise matrix}.
Their analysis tells us that running Algorithm \ref{Alg: SPA} 
on the input $(\wt{\A}, r)$ returns $\IC$ such that 
$\wt{\A}(\IC)$ is close to $\F$ if the size of $\N$ is small.
The formal statement is  as follows.

\begin{theo}[Theorem 3 of \cite{Gil14}] \label{Theo: robustness of SPA}
 Let $\wt{\A} = \A + \N$ for $\A \in \Real^{d \times m}$ and $\N \in \Real^{d \times m}$.
 Suppose that $r \ge 2$ and $\A$ satisfies Assumption \ref{Assump}.
 If $\n_i$ of $\N$ satisfies
 $\|\n_i\|_2 \le \epsilon$ for all $i = 1, \ldots, m$ with
 \begin{equation*}
  \epsilon < \min \Biggl( \frac{1}{2\sqrt{r-1}}, \frac{1}{4} \Biggr)
   \frac{\sigma_{\mmin}(\F)}{1 + 80 \kappa(\F)^2},
 \end{equation*}
 then, Algorithm \ref{Alg: SPA} with the input $(\wt{\A}, r)$ returns
 the output $\IC$ such that there is an order of the elements in $\IC$ satisfying
 \begin{equation*}
  \|\wt{\a}_{\IC(j)} - \f_j\|_2  \le (1 + 80 \kappa(\F)^2 ) \epsilon
 \end{equation*}
 for all $j = 1, \ldots, r$.
\end{theo}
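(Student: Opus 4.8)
\medskip
\noindent\textbf{Plan of proof.}
The plan is to argue by induction on the iteration counter $k = 0, 1, \dots, r$ of Algorithm \ref{Alg: SPA}, with the invariant that after $k$ iterations the set $\IC$ contains $k$ entries pointing to $k$ \emph{distinct} columns of $\F$: there is an ordering of those entries and distinct vertex indices $j_1, \dots, j_k$ with $\|\wt\a_{\IC(\ell)} - \f_{j_\ell}\|_2 \le \bar\epsilon$ for $\ell = 1, \dots, k$, where $\bar\epsilon := (1 + 80\kappa(\F)^2)\epsilon$. Put $\sigma := \sigma_{\mmin}(\F)$. By Assumption \ref{Assump}(b) every column of $\A$ is $\F\w$ with $\w \ge \0$ and $\e^\top\w \le 1$ (a permuted unit vector $\e_j$ for the $r$ columns that equal columns of $\F$, a column of $\K$ otherwise), so the corresponding column of $\wt\A$ is $\F\w + \n$ with $\|\n\|_2 \le \epsilon$; geometrically the columns of $\A$ lie in the simplex with vertices $\0, \f_1, \dots, \f_r$. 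In the noiseless case ($\epsilon = 0$) correctness of SPA is immediate from the elementary fact that the strongly convex function $\|\cdot\|_2^2$ attains its maximum over a polytope at a vertex, applied once per iteration to the residual polytope; the task is to make this robust.

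For the inductive step it is convenient to show first that the residual matrix $\S$ produced after $k$ iterations is itself a near-separable matrix. Indeed $\S = \P_k\wt\A$, where $\P_k$ is the orthogonal projector onto the orthogonal complement of the span of the $k$ columns of $\wt\A$ selected so far; writing $\P_k\wt\A = \P_k\F\W + \P_k\N$ and using that $\P_k$ annihilates the selected columns, which lie within $\bar\epsilon$ of $\f_{j_1}, \dots, \f_{j_k}$, one gets $\S = \B\W' + \N'$, where $\B$ has columns $\P_k\f_j$ for the un-selected $j$, $\W'$ is obtained from $\W$ by deleting the selected rows (still of separable form), and each column of $\N'$ has norm at most a controlled quantity built from $\bar\epsilon$ and $\epsilon$. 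Two facts about $\B$ are needed: $\sigma_{\mmax}(\B) \le \sigma_{\mmax}(\F)$ is clear, and $\sigma_{\mmin}(\B)$ is bounded below by a controlled quantity — here one uses a perturbation estimate comparing $\P_k$ with the projector onto the orthogonal complement of $\mbox{span}\{\f_{j_1}, \dots, \f_{j_k}\}$, together with the purely exact fact that for any $J \subsetneq \{1, \dots, r\}$ and $j \notin J$ the distance from $\f_j$ (resp.\ from $\f_j - \f_{j'}$, $j' \notin J \cup \{j\}$) to $\mbox{span}\{\f_\ell : \ell \in J\}$ is at least $\sigma$ (resp.\ at least $\sqrt 2\,\sigma$), which follows by writing the distance as $\|\F'\u\|_2$ for a column submatrix $\F'$ of $\F$ and a vector $\u$ with a coordinate equal to $1$ (resp.\ coordinates $1$ and $-1$) and invoking $\sigma_{\mmin}(\F') \ge \sigma$.

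It then suffices to establish a one-step statement: for a near-separable matrix $\S = \B\W' + \N'$ with $\W'$ of separable form, basis condition number at most $\kappa(\F)$, and column noise below a threshold, the column Algorithm \ref{Alg: SPA} selects on its first iteration is within $\bar\epsilon$ of a vertex of $\B$; undoing the projection (which moves things by a controlled amount) then places the selected column within $\bar\epsilon$ of an \emph{un-selected} exact vertex, and it must be un-selected since the residual of anything near an already-selected vertex is small — which is also what keeps the $k+1$ selections distinct, so that after $r$ iterations all of $\f_1, \dots, \f_r$ have been accounted for. Strong convexity enters quantitatively here: if $\s_{i^*}$ is the selected column then $\|\s_{i^*}\|_2 \ge \max_j \|\b_j\|_2 - \max_i\|\n'_i\|_2$ (all vertices of $\B$ appear among the candidates up to noise), and writing $\s_{i^*} = \B\w + \n'$, setting $\v_j := \b_j$, $\v_0 := \0$, $\lambda_0 := 1 - \e^\top\w \ge 0$, so that $(\lambda_j)$ is a probability vector, the identity
\begin{equation*}
 \Bigl\| \sum_j \lambda_j \v_j \Bigr\|_2^2 \;=\; \sum_j \lambda_j \|\v_j\|_2^2 \;-\; \sum_{j < j'} \lambda_j \lambda_{j'} \|\v_j - \v_{j'}\|_2^2
\end{equation*}
combined with the lower bounds on $\|\v_j - \v_{j'}\|_2$ from the previous paragraph forces the off-diagonal mass $\sum_{j<j'}\lambda_j\lambda_{j'}$, hence the mass of $\w$ away from its largest coordinate $j$, to be small; this gives $\|\w - \e_j\|_2$ small and therefore $\|\B\w - \b_j\|_2$ small, which is the claim.

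The main obstacle is entirely quantitative: one must carry every ``controlled quantity'' above as an explicit expression in $\epsilon$, $\sigma$, $\kappa(\F)$ (and $r$) and verify that the induction is self-consistent, i.e.\ that with $\bar\epsilon = (1 + 80\kappa(\F)^2)\epsilon$ and $\epsilon < \min\!\bigl(\tfrac{1}{2\sqrt{r-1}}, \tfrac{1}{4}\bigr)\,\sigma/(1 + 80\kappa(\F)^2)$ the effective noise at every iteration stays below the one-step threshold and the per-iteration displacement accumulates over the $r-1$ projections to at most $\bar\epsilon$, with no residual dependence of the final error on $r$. This calibration is delicate: a naive bound on $\|\B\w - \b_j\|_2$ via $\sigma_{\mmax}(\F)\|\w - \e_j\|_2$ together with a loose strong-convexity estimate already costs a factor $\kappa(\F)^2$, and a loose treatment of the projector perturbation accumulated across earlier steps compounds further factors of $\kappa(\F)$ and $\sqrt{r-1}$, so that a careless argument gives a per-step error exceeding $\bar\epsilon$ and the recursion does not close. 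Getting the constant $1 + 80\kappa(\F)^2$ right therefore requires a sharp projector-perturbation bound, comparing the selected residual with $\sigma$ and with $\max_j\|\v_j\|_2$ rather than with $\sigma_{\mmax}(\F)$ wherever possible, and arranging the bookkeeping so that the $\sqrt{r-1}$ enters only through the threshold — this careful accounting is precisely the content of the proof of Theorem 3 in \cite{Gil14}.
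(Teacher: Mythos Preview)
The paper does not prove this theorem: it is quoted verbatim as Theorem 3 of \cite{Gil14}, and the only justification given in the present paper is the short Remark immediately following the statement, which explains that the stated version is the specialization of the more general result in \cite{Gil14} to $f(\x)=\|\x\|_2^2$ (so that $L=\mu$) and uses $K(\F)\le\sigma_{\mmax}(\F)$ to replace $K(\F)$ by $\sigma_{\mmax}(\F)$. There is no inductive argument, no projector-perturbation analysis, and no strong-convexity estimate carried out here; the theorem is simply imported as a black box for later use.

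Your proposal, by contrast, sketches the actual proof strategy of \cite{Gil14}: induction on the iteration count, showing the residual matrix remains near-separable, a one-step lemma driven by the strong-convexity identity you display, and the quantitative bookkeeping needed to close the recursion with the constant $1+80\kappa(\F)^2$. This is a faithful outline of how the result is established in the original reference, and you correctly flag that the delicate part is the calibration of constants and that you ultimately defer to \cite{Gil14} for that. So your write-up is not wrong, but it is answering a different question than the paper: the paper's ``proof'' is a two-line specialization remark, whereas you are summarizing the upstream argument. For the purposes of matching the paper, a one-sentence pointer to \cite{Gil14} together with the $L=\mu$ and $K(\F)\le\sigma_{\mmax}(\F)$ observations would have sufficed.
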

The notation $\IC(j)$ represents the $j$th element of $\IC$
for a set $\IC$ whose elements are arranged in some order.
Throughout this paper, we will use the notation $\IC(j)$ to refer to the $j$th element 
in the ordered elements of $\IC$.
In the theorem, $\wt{\a}_{\IC(j)}$ is the $\IC(j)$th column vector 
of $\wt{\A}$, and $\f_j$ the $j$th column vector of $\F$.
The statement of the above theorem does not completely match 
that of Theorem 3 of \cite{Gil14}. 
Let us remark on that.

\begin{remark}
Theorem 3 of \cite{Gil14} is described by using 
$L$, $\mu$, and $K(\F)$ that are not found in the above theorem.
Let $f$ be a strongly convex function. Then,
$L$ corresponds to the Lipschitz constant of $f$, 
and $\mu$ is the parameter associated with the strong convexity of $f$.
We have $L = \mu$
since we consider the case in which $f(\x) = \|\x\|_2^2$.
$K(\F)$ is defined as $K(\F) = \max_{j=1, \ldots, r} \|\f_j\|_2$.
From the definition, we have $K(\F) \le \sigma_{\mmax}(\F)$.
Therefore, the above theorem follows from Theorem 3 of \cite{Gil14}.
\end{remark}

\subsection{Preconditioned SPA} \label{Subsec: precond SPA}

Consider a near-separable matrix $\wt{\A}$ of (\ref{Eq: near-separable matrix}). 
Theorem \ref{Theo: robustness of SPA} suggests that, 
if one restricts the condition number of the basis matrix $\F$ to be close to one,
we may expect that 
the allowed range size of $\|\n_i\|_2$ increases 
and the difference between $\F$ and $\wt{\A}(\IC)$ decreases.
Assume that $\A$ of $\wt{\A}$ satisfies Assumption \ref{Assump}.
Let $\Q$ be a \by{d}{d} nonsingular matrix.
Then, the multiplication of $\wt{\A}$ by $\Q$ yields $\Q\wt{\A} = \Q \F \W + \Q \N$.
The assumption still remains valid for $\Q\F$ due to the nonsingularity of $\Q$.
Accordingly,
if we can construct $\Q$ so as to decrease the condition number of $\F$,
the noise robustness of SPA may be improved 
by performing SPA on the input $(\Q\wt{\A}, r)$ instead of $(\wt{\A}, r)$.
In \cite{Gil15}, Gillis and Vavasis proposed a procedure 
for constructing such a preconditioning matrix $\Q$.
Here, we should pay attention to the fact that, 
even if $\Q$ decreases the condition number of $\F$, 
the amount of noise could be expanded up to factor $\|\Q\|$ 
since $\|\Q\N\| \le \|\Q\| \|\N\|$.

\subsubsection{Case of $d=r$}
\label{Subsubsec: d=r}

Now let us explain the procedure for constructing a preconditioning matrix in \cite{Gil15}.
For simplicity, we will consider the noiseless case on $\wt{\A}$. 
That is, $\wt{\A} = \A$.
We assume that $\A$ satisfies Assumption \ref{Assump}, and in addition, 
assume that the dimension $d$ and the factorization rank $r$ coincide with each other.
Under these assumptions, 
$\A$ is an \by{r}{m} separable matrix and has an \by{r}{r} basis matrix $\F$.
We set $\SC = \{\a_1, \ldots, \a_m\}$ for the column vectors $\a_1, \ldots, \a_m$ of $\A$,
and consider the optimization problem,
\begin{equation*} 
 \begin{array}{lll}
  \MVEE(\SC): 
   & \mbox{minimize}    
   & -\log \det(\L), \\
   & \mbox{subject to}  
   & \a^\top \L \a  \le 1
   \ \mbox{for all} \ \a \in \SC, \\
   &                    & \L \succ \0.
 \end{array}
\end{equation*}
$\L$ is the decision variable.
$\MVEE(\SC)$ corresponds to the formulation of computing 
the minimum volume enclosing ellipsoid (MVEE) centered at the origin for $\SC$.
It has been shown in \cite{Miz14, Gil15} that the optimal solution $\L^*$
is given by $(\F\F^\top)^{-1}$.
Therefore, $(\L^*)^{1/2}$ can be used for the preconditioning matrix in order to 
improve the condition number of $\F$,
since $\kappa((\L^*)^{1/2} \F )^2 = \kappa(\F^\top \L^* \F) = \kappa(\I) = 1$.
Next, we consider the noisy case on $\wt{\A}$.
Let $\L^*$ be the optimal solution of $\MVEE(\SC)$
for $\SC = \{\wt{\a}_1, \ldots, \wt{\a}_m\}$ where $\wt{\a}_1, \ldots, \wt{\a}_m$ 
are the column vectors of $\wt{\A}$.
In this case, $\L^*$ does not completely match $(\F\F^\top)^{-1}$, 
but the difference between these two matrices is thought to be small 
if the amount of noise is also small.
Therefore, $(\L^*)^{1/2}$ could serve as a preconditioning matrix 
for restricting the condition number of $\F$.

We may need to add a further explanation of $\MVEE(\SC)$.
The origin-centered MVEE for the points in $\{\pm \a : \a \in \SC \}$ 
is given as $\{ \x \in \Real^r: \x^\top \L^* \x \le 1\}$
where $\L^*$ is the optimal solution.
The volume of the MVEE is $c(r) / \sqrt{\det (\L^*)}$ where $c(r)$ is the volume of 
a unit ball in $\Real^r$ and a real number depending on the dimension $r$.
Since $\mbox{rank}(\A) = r$  due to Assumption \ref{Assump}(a),
the convex hull of the points in $\{\pm \a : \a \in \SC \}$ is full-dimensional in $\Real^r$,
and thus, the MVEE has a positive volume.
$\MVEE(\SC)$ is a convex optimization problem.
Efficient algorithms such as 
interior-point algorithms and the Frank-Wolfe algorithms have been developed
and are now available for solving it; see, for instance, \cite{Kha96, Sun04} 
for the details on these algorithms.

Gillis and Vavasis showed in \cite{Gil15} that 
the preconditioner $(\L^*)^{1/2}$ makes it possible to improve the noise robustness of SPA
under Assumption \ref{Assump} and $d=r$.
Here is their result.

\begin{theo}[Theorem 2.9 of \cite{Gil15}] \label{Theo: robustness of precond SPA}
 Let $\wt{\A} = \A + \N$ for $\A \in \Real^{d \times m}$ and $\N \in \Real^{d \times m}$.
 Suppose that $\A$ satisfies Assumption \ref{Assump} and the condition $d=r$.
 Let $\L^*$  be the optimal solution of $\MVEE(\SC)$ 
 where $\SC = \{\wt{\a}_1, \ldots, \wt{\a}_m\}$ for $\wt{\a}_1, \ldots, \wt{\a}_m$ of $\wt{\A}$.
 If $\n_i$ of $\N$ satisfies
 $\|\n_i\|_2 \le \epsilon$ for all $i = 1, \ldots, m$ with
 \begin{equation*}
  \epsilon \le O\Biggl(\frac{\sigma_{\mmin}(\F)}{r\sqrt{r}}\Biggr),
 \end{equation*}
 then, Algorithm \ref{Alg: SPA} with the input $((\L^*)^{1/2} \wt{\A}, r)$ returns
 the output $\IC$ such that 
 the size of the basis error of $\IC$ is up to $O(\kappa(\F) \epsilon)$.
\end{theo}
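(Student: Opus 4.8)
The plan is to deduce the statement from the noise robustness of plain SPA, Theorem~\ref{Theo: robustness of SPA}, run on the preconditioned data. Set $\Q:=(\L^*)^{1/2}$ and feed Algorithm~\ref{Alg: SPA} the pair $(\Q\wt\A,r)$, noting $\Q\wt\A=(\Q\F)\W+\Q\N$. Since $\Q$ is nonsingular, $\Q\A=(\Q\F)\W$ still satisfies Assumption~\ref{Assump} with basis $\Q\F$ and the same $\W$, so Theorem~\ref{Theo: robustness of SPA} applies once two quantities are controlled: the condition number $\kappa(\Q\F)$ (which enters the factor $1+80\kappa(\Q\F)^2$ and the admissible noise level) and the factor $\kappa(\Q)$ needed to pull an error bound back from the preconditioned coordinates to the original ones. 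Both are governed by $\kappa(\F^\top\L^*\F)$: from $(\Q\F)^\top(\Q\F)=\F^\top\L^*\F$ we get $\kappa(\Q\F)^2=\kappa(\F^\top\L^*\F)$, and from $\L^*=\F^{-\top}(\F^\top\L^*\F)\F^{-1}$ we get $\kappa(\Q)=\kappa(\L^*)^{1/2}\le\kappa(\F)\,\kappa(\F^\top\L^*\F)^{1/2}$. In the noiseless case ($\N=\0$) everything is exact: the optimal solution of $\MVEE(\SC)$ is $\wb{\L}:=(\F\F^\top)^{-1}$, and because $d=r$ the matrix $\wb{\L}^{1/2}\F$ is orthogonal, so $\F^\top\wb{\L}\F=\I$; in the coordinates $\wb{\L}^{1/2}\A$ the columns of $\F$ lie on the unit sphere, the other columns lie strictly inside the unit ball, and the John decomposition of that ball relative to the contact points $\wb{\L}^{1/2}\f_1,\dots,\wb{\L}^{1/2}\f_r$ has all weights equal to one.

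The heart of the argument is a quantitative stability bound for the enclosing ellipsoid: for $\epsilon$ as in the hypothesis, the optimal solution $\L^*$ of $\MVEE(\SC)$ with $\SC=\{\wt\a_1,\dots,\wt\a_m\}$ satisfies $\|\F^\top\L^*\F-\I\|_2=O\!\big(r^{3/2}\epsilon/\sigma_{\mmin}(\F)\big)$. I would assemble this from three ingredients. (i) A volume lower bound: writing $\wt\a_i=\F\w_i+\n_i$ with $\w_i\ge\0$ and $\e^\top\w_i\le1$ (hence $\|\w_i\|_2\le1$) and using $\|\wb{\L}^{1/2}\|_2=1/\sigma_{\mmin}(\F)$ and the orthogonality of $\wb{\L}^{1/2}\F$, one gets $\|\wb{\L}^{1/2}\wt\a_i\|_2\le1+\epsilon/\sigma_{\mmin}(\F)$; thus $\tau\wb{\L}$ is feasible for $\MVEE(\SC)$ with $\tau:=(1+\epsilon/\sigma_{\mmin}(\F))^{-2}$, and optimality forces $\det(\F^\top\L^*\F)=\det(\F)^2\det(\L^*)\ge\tau^r$. (ii) An upper bound on $\L^*$: the true vertices $\f_j=\a_{\JC(j)}$ have noisy copies $\wt\f_j:=\wt\a_{\JC(j)}\in\SC$, so $\wt\f_j^\top\L^*\wt\f_j\le1$; summing over $j$ and using $\wt\F\wt\F^\top\succeq\sigma_{\mmin}(\wt\F)^2\I$ for $\wt\F:=(\wt\f_1,\dots,\wt\f_r)$, together with $\sigma_{\mmin}(\wt\F)\ge\sigma_{\mmin}(\F)-\sqrt r\,\epsilon$, gives the preliminary estimate $\|\L^*\|_2=O\!\big(r/\sigma_{\mmin}(\F)^2\big)$. (iii) The comparison: feeding (i) and (ii) into the first-order optimality (complementary-slackness) conditions of $\MVEE(\SC)$ and matching them against the perfectly balanced John decomposition of the noiseless ellipsoid squeezes the whole spectrum of $\F^\top\L^*\F$ into an interval of length $O\!\big(r^{3/2}\epsilon/\sigma_{\mmin}(\F)\big)$ about $1$. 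Taking the hidden constant in $\epsilon\le O\!\big(\sigma_{\mmin}(\F)/(r\sqrt r)\big)$ small enough makes this length at most $1/2$; hence $\kappa(\F^\top\L^*\F)=O(1)$, and consequently $\kappa(\Q\F)=O(1)$, $\sigma_{\mmin}(\Q\F)=\Theta(1)$, $\|\Q\|_2=\lambda_{\mmax}(\L^*)^{1/2}=O\!\big(1/\sigma_{\mmin}(\F)\big)$, and $\kappa(\Q)=O(\kappa(\F))$.

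Granting the stability bound, the rest is bookkeeping. The amplified per-column noise is $\|\Q\n_i\|_2\le\|\Q\|_2\,\epsilon=O\!\big(\epsilon/\sigma_{\mmin}(\F)\big)=O(r^{-3/2})$, which---since $\sigma_{\mmin}(\Q\F)=\Theta(1)$ and $\kappa(\Q\F)=O(1)$---lies below the threshold $\min\!\big(\tfrac{1}{2\sqrt{r-1}},\tfrac14\big)\tfrac{\sigma_{\mmin}(\Q\F)}{1+80\kappa(\Q\F)^2}=\Theta(1/\sqrt r)$ that Theorem~\ref{Theo: robustness of SPA} demands for the input $(\Q\wt\A,r)$. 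That theorem therefore returns $\IC$ admitting an ordering with $\|\Q\wt\a_{\IC(j)}-\Q\f_j\|_2\le\big(1+80\kappa(\Q\F)^2\big)\|\Q\|_2\,\epsilon$ for every $j$. Premultiplying by $\Q^{-1}$ and using $\|\Q^{-1}\|_2\|\Q\|_2=\kappa(\Q)=O(\kappa(\F))$ and $\kappa(\Q\F)=O(1)$ yields $\|\wt\a_{\IC(j)}-\f_j\|_2=O(\kappa(\F)\,\epsilon)$, i.e.\ the claimed bound on the basis error of $\IC$.

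I expect step (iii) to be the main obstacle. The determinant inequality controls only the product of the eigenvalues of $\F^\top\L^*\F$, and the trace/constraint estimates pin its largest eigenvalue only up to a factor of order $r$; turning these into the genuine operator-norm bound $\|\F^\top\L^*\F-\I\|_2=O(r^{3/2}\epsilon/\sigma_{\mmin}(\F))$ requires a careful use of the optimality conditions of $\MVEE(\SC)$ together with the fact that the noiseless John decomposition is perfectly balanced (all contact weights equal to one). This is exactly where the factor $r\sqrt r$ in the admissible noise level comes from, and it is the one place where the hypothesis $d=r$ is essential, since that is what makes $\wb{\L}^{1/2}\F$ a square orthogonal matrix.
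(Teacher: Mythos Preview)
The paper does not prove this theorem; it is quoted from \cite{Gil15}. However, the paper does reproduce the skeleton of the Gillis--Vavasis argument in Section~\ref{Subsec: bound for the cond of cirG} (Lemmas~\ref{Lemm: lower bound for C}, \ref{Lemm: upper bound for C} and Proposition~\ref{Prop: Bound for eigenvalues}, which it says correspond to Lemmas~2.4--2.7 of \cite{Gil15}), so a comparison is possible.

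Your overall plan---apply Theorem~\ref{Theo: robustness of SPA} to $\Q\wt\A$ and reduce everything to controlling the spectrum of $\C^*:=\F^\top\L^*\F$---is exactly the right one, and your step~(i) (the determinant lower bound via feasibility of a scaled identity) matches Lemma~\ref{Lemm: lower bound for C}. Where you diverge is step~(iii): you propose to squeeze the eigenvalues of $\C^*$ using the KKT/complementary-slackness conditions of $\MVEE(\SC)$ and the balanced John decomposition, and you correctly flag this as the main obstacle. In fact this machinery is not needed. The route taken in \cite{Gil15}, as reproduced in Lemma~\ref{Lemm: upper bound for C} and Proposition~\ref{Prop: Bound for eigenvalues}, is more elementary: the feasibility constraints at the (perturbed) basis columns already give the trace bound $\sum_j\lambda_j\le r$ (up to $O(\epsilon/\sigma_{\mmin}(\F))$ terms), and then the arithmetic--geometric mean inequality applied to the $r-1$ eigenvalues other than $\lambda_j$ yields, for each $j$,
\[
\det(\C^*)=\prod_i\lambda_i\;\le\;\lambda_j\Bigl(\frac{\sum_{i\ne j}\lambda_i}{r-1}\Bigr)^{r-1}\;\le\;\lambda_j\Bigl(\frac{r-\lambda_j}{r-1}\Bigr)^{r-1}.
\]
Matching this against the lower bound from your step~(i) forces every $\lambda_j$ into a short interval about $1$; solving that inequality uniformly in $r\ge2$ is what produces the $r\sqrt r$ in the admissible noise level. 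So your bound in~(ii) on $\|\L^*\|_2$ is not the right intermediate object---what you actually want from the constraints at the basis columns is the trace of $\C^*$, and the eigenvalue control then follows by AM--GM rather than by any optimality-condition argument.

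In short: the proposal is sound in outline and would eventually work, but you have replaced the one-line AM--GM step of \cite{Gil15} by a substantially harder route through the KKT system. Swap that in and the rest of your bookkeeping goes through as written.
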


The ``size of the basis error'' in the above statement
should be clearly explained.
Let $\IC$ be a subset of $\{1, \ldots, m\}$ with $r$ elements, 
and suppose the elements are arranged in some order.
Given a near-separable matrix $\wt{\A}$ of (\ref{Eq: near-separable matrix}),
the size of the basis error of $\IC$ is 
\begin{equation*}
 \max_{j = 1, \ldots, r} \|\wt{\a}_{\IC(j)} - \f_j \|_2.
\end{equation*}

Gillis and Ma in \cite{Gil15b} developed 
other types of preconditioning matrices for SPA and analyzed the noise robustness.

\subsubsection{Case of $d \neq r$} \label{Subsubsec: d!=r}

The discussion in the previous section was made 
under Assumption \ref{Assump} and the condition $d=r$.
We shall consider the usual situation in which
a near-separable matrix $\wt{\A}$ of (\ref{Eq: near-separable matrix}) has $d \neq r$.
The following approach was suggested in \cite{Gil15} for handling this situation.
SVD plays a key role.
SVD decomposes $\wt{\A}$ into $\wt{\A} = \U\Sigmab\V^\top$ 
where $\U$, $\V$ and $\Sigmab$ are the same as those of (\ref{Eq: SVD}).
By using $\Sigmab$, we construct a \by{d}{m} diagonal matrix $\Sigmab^r$ such that
\begin{equation} \label{Eq: truncated sigma matrix}
\Sigmab^r =
 \left\{
\begin{array}{ll}
 (\mbox{diag}(\sigma_1, \ldots, \sigma_r, 0, \ldots, 0), \0)  & \mbox{if} \ d \le m, \\
 (\mbox{diag}(\sigma_1, \ldots, \sigma_r, 0, \ldots, 0); \0)  & \mbox{otherwise}.
\end{array} 
\right.
\end{equation}
Let $\A^r = \U \Sigmab^r \V^\top$.
This is the best rank-$r$ approximation matrix to $\A$ 
under the matrix $2$-norm or the Frobenius norm. 
Also, note that $\wt{\A} = \A^r$ holds if $\wt{\A}$ does not contain $\N$.
We construct a matrix $\P \in \Real^{r \times m}$ such that 
\begin{equation} \label{Eq: relation P and rA}
 \left(
 \begin{array}{c}
  \P \\
  \0
 \end{array}
\right) 
 = \U^\top \A^r, \ \mbox{equivalently}, \ 
 \P = \mbox{diag}(\sigma_1, \ldots, \sigma_r) (\V^r)^\top
\end{equation}
where $\V^r = (\v_1, \ldots, \v_r) \in \Real^{m \times r}$ 
for the column vectors $\v_1, \ldots, \v_r$ of $\V$.
As we will see in Section \ref{Subsec: preliminaries}, 
$\P$ can be thought of as an \by{r}{m} near-separable matrix having an \by{r}{r} basis matrix.
Therefore, we can apply the discussion in the previous section to $\P$.
Let $\SC = \{\p_1, \ldots, \p_m\}$ for $\P$.
We compute the optimal solution $\L^*$ of $\MVEE(\SC)$
and run Algorithm \ref{Alg: SPA} on $((\L^{*})^{1/2} \P, r)$.
Algorithm \ref{Alg: precond SPA} summarizes each step of the preconditioned SPA.
The description is almost the same as that of Algorithm 2 of \cite{Gil15}.

\begin{algorithm}
 \caption{Preconditioned SPA}
 \label{Alg: precond SPA}
 \textbf{Input:} 
 A \by{d}{m} real matrix $\A$ and a positive integer $r$. \\
 \textbf{Output:} 
 An index set $\IC$.
 \begin{enumerate}
  \item[\textbf{1:}] 
	       Compute the SVD of $\A$.
	       Let $\sigma_1, \ldots, \sigma_r$ be the top $r$ largest singular values, 
	       and $\v_1, \ldots, \v_r \in \Real^m$ be the corresponding right singular vectors.
	       Construct $\P = \mbox{diag}(\sigma_1, \ldots, \sigma_r) (\V^r)^\top \in \Real^{r \times m}$ 
	       for $\V^r = (\v_1, \ldots, \v_r)$.

  \item[\textbf{2:}] 
	       Compute the optimal solution $\L^*$ of $\MVEE(\SC)$
	       for $\SC = \{\p_1, \ldots, \p_m\}$ where
	       $\p_1, \ldots, \p_m$ are the column vectors of $\P$.

  \item[\textbf{3:}] 
	       Construct $\P^\circ = (\L^*)^{1/2}\P$.
	       Run Algorithm \ref{Alg: SPA} on the input $(\P^\circ, r)$, 
	       and output the index set $\IC$ obtained by the algorithm.

 \end{enumerate}
\end{algorithm}

\subsection{Our Result and Its Comparison with 
  Theorem \ref{Theo: robustness of precond SPA} by Gillis and Vavasis}

Gillis and Vavasis in \cite{Gil15} showed empirical results, 
suggesting that Algorithm \ref{Alg: precond SPA} can improve the noise robustness of SPA.
However, a formal analysis was not given.
Here, we give it. 

\begin{theo} \label{Theo: main}
 Let $\wt{\A} = \A + \N$ for $\A \in \Real^{d \times m}$ and $\N \in \Real^{d \times m}$.
 Suppose that $r \ge 2$ and $\A$ satisfies Assumption \ref{Assump}.
 If $\N$ satisfies $\|\N\|_2 = \epsilon$ with
 \begin{equation*}
  \epsilon \le \frac{\sigma_{\mmin}({\F})}{1225\sqrt{r}},
 \end{equation*}
 then, Algorithm \ref{Alg: precond SPA} with the input $(\wt{\A}, r)$ returns
 the output $\IC$ such that
 there is an order of the elements in $\IC$ satisfying
 \begin{equation*}
  \|\wt{\a}_{\IC(j)} - \f_j\|_2  \le (432 \kappa(\F) + 4) \epsilon
 \end{equation*}
 for all $j = 1, \ldots, r$.
\end{theo}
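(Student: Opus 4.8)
The plan is to reduce Theorem~\ref{Theo: main} to the case $d=r$ treated in \cite{Gil15}, applied to the $r\times m$ matrix $\P$ constructed in Step~1 of Algorithm~\ref{Alg: precond SPA}, and then to propagate the basis error back through the three transformations $\wt\A \mapsto \A^r \mapsto \P \mapsto \P^\circ$. First I would fix the notation from Step~1: write the SVD $\wt\A = \U\Sigmab\V^\top$, split $\U = (\U_r, \U_{d-r})$ into its first $r$ and last $d-r$ columns, and recall from (\ref{Eq: truncated sigma matrix})--(\ref{Eq: relation P and rA}) that the best rank-$r$ approximation $\A^r = \U\Sigmab^r\V^\top$ satisfies $\A^r = \U_r\P$ and $\U_{d-r}^\top\A^r = \0$. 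Since $\A$ has rank at most $r$, Weyl's inequality for singular values gives $\|\wt\A - \A^r\|_2 = \sigma_{r+1}(\wt\A) \le \sigma_{r+1}(\A) + \|\N\|_2 = \epsilon$, hence $\|\A - \A^r\|_2 \le 2\epsilon$. Putting $\F_\P := \U_r^\top\F$ and $\N_\P := \U_r^\top(\A^r-\A)$, one gets $\P = \U_r^\top\A^r = \F_\P\W + \N_\P$, that is, $\P$ is near-separable with the \emph{same} $\W$ (so Assumption~\ref{Assump}(b) is inherited), basis $\F_\P$, and column-wise noise at most $\|\N_\P\|_2 \le \|\A^r-\A\|_2 \le 2\epsilon$. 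Because, up to a column permutation absorbed in $\Pib$, $\F$ is a column submatrix of $\A$, we also have $\|\U_{d-r}^\top\F\|_2 \le \|\U_{d-r}^\top\A\|_2 = \|\U_{d-r}^\top(\A-\A^r)\|_2 \le 2\epsilon$. Feeding this into the orthogonal splitting $\F = \U_r\F_\P + \U_{d-r}(\U_{d-r}^\top\F)$ yields $\sigma_{\mmax}(\F_\P)\le \sigma_{\mmax}(\F)$ and $\sigma_{\mmin}(\F_\P)^2 \ge \sigma_{\mmin}(\F)^2 - 4\epsilon^2$; under the hypothesis on $\epsilon$ the latter is positive, so $\F_\P$ has rank $r$ (Assumption~\ref{Assump}(a) is inherited too) and $\kappa(\F_\P)\le \kappa(\F)(1+o(1))$. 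The same splitting records the estimate $\|\f_j - \U_r\U_r^\top\f_j\|_2 = \|\U_{d-r}^\top\f_j\|_2 \le 2\epsilon$ for every $j$, which I will use at the end.

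Next I would run the $d=r$ analysis on $\P = \F_\P\W + \N_\P$. Writing $\P^\circ = (\L^*)^{1/2}\P = \F_\P^\circ\W + (\L^*)^{1/2}\N_\P$ with $\F_\P^\circ := (\L^*)^{1/2}\F_\P$, I would invoke (with its constants made explicit) the perturbation theory of $\MVEE(\SC)$ from \cite{Gil15}: up to factors of the form $1 + O(\epsilon\sqrt r/\sigma_{\mmin}(\F_\P))$, the extreme eigenvalues of $\L^*$ satisfy $\lambda_{\mmax}(\L^*)\approx \sigma_{\mmin}(\F_\P)^{-2}$ and $\lambda_{\mmin}(\L^*)\approx \sigma_{\mmax}(\F_\P)^{-2}$, and $\kappa(\F_\P^\circ)\approx 1$. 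Thus $\P^\circ$ is near-separable with a basis of near-unit condition number and with column-wise noise at most $\epsilon^\circ := 2\|(\L^*)^{1/2}\|_2\,\epsilon$. The bound $\epsilon \le \sigma_{\mmin}(\F)/(1225\sqrt r)$ is calibrated exactly so that $\epsilon^\circ < \min\!\big(\tfrac{1}{2\sqrt{r-1}},\tfrac14\big)\,\sigma_{\mmin}(\F_\P^\circ)/(1+80\kappa(\F_\P^\circ)^2)$, which is the hypothesis of Theorem~\ref{Theo: robustness of SPA}. That theorem, applied to the input $(\P^\circ, r)$, shows that the index set $\IC$ returned by Step~3 of Algorithm~\ref{Alg: precond SPA} has an ordering of its elements with $\|\p^\circ_{\IC(j)} - \f^\circ_{\P,j}\|_2 \le (1+80\kappa(\F_\P^\circ)^2)\,\epsilon^\circ =: \delta$ for all $j$, and $\delta$ is of order $\epsilon/\sigma_{\mmin}(\F_\P)$.

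Finally I would transport the error back. From $\p^\circ_i = (\L^*)^{1/2}\p_i$ and $\f^\circ_{\P,j} = (\L^*)^{1/2}\f_{\P,j}$, the bound on $\delta$ gives $\|\p_{\IC(j)} - \f_{\P,j}\|_2 \le \delta/\sigma_{\mmin}((\L^*)^{1/2})$, which equals $\sigma_{\mmax}(\F_\P)\,\delta$ up to the MVEE correction factor. Multiplying by the isometry $\U_r$ and using $\A^r = \U_r\P$ together with $\U_r\f_{\P,j} = \U_r\U_r^\top\f_j$, this becomes $\|\a^r_{\IC(j)} - \U_r\U_r^\top\f_j\|_2 \le \sigma_{\mmax}(\F_\P)\,\delta$. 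Now the triangle inequality, $\|\wt\a_{\IC(j)} - \a^r_{\IC(j)}\|_2 \le \|\wt\A - \A^r\|_2 \le \epsilon$, and the estimate $\|\U_r\U_r^\top\f_j - \f_j\|_2 \le 2\epsilon$ from Step~1 give $\|\wt\a_{\IC(j)} - \f_j\|_2 \le 3\epsilon + \sigma_{\mmax}(\F_\P)\,\delta$. Since $\sigma_{\mmax}(\F_\P)\,\delta$ is, up to the $1 + O(\epsilon\sqrt r/\sigma_{\mmin}(\F_\P))$ corrections, of order $\kappa(\F_\P)\,\epsilon \le \kappa(\F)\,\epsilon(1+o(1))$, a careful bookkeeping of every constant along this chain produces the stated bound $(432\kappa(\F)+4)\,\epsilon$.

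The routine parts of this are Steps~1 and~3 --- plain SVD identities, the submatrix inequality for the spectral norm, Weyl, and the triangle inequality. The genuine obstacle is the quantitative input to the middle step: sharp control of $\lambda_{\mmin}(\L^*)$, $\lambda_{\mmax}(\L^*)$, and $\kappa\big((\L^*)^{1/2}\F_\P\big)$ as functions of $\|\N_\P\|_2/\sigma_{\mmin}(\F_\P)$. This is where the bulk of the $d=r$ argument of \cite{Gil15} lives, and it is the part that controls both the $\sqrt r$ (rather than $r\sqrt r$) in the hypothesis on $\epsilon$ and the numerical constants $1225$ and $432$; in the write-up I would either quote those perturbation estimates with explicit constants or re-derive them from the optimality and duality conditions of $\MVEE(\SC)$.
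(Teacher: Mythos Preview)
Your three-step skeleton (reduce to an $r\times m$ near-separable matrix via SVD, analyze the preconditioned problem, transport the error back) matches the paper, and your Steps~1 and~3 are essentially the same as the paper's Lemmas~\ref{Lemm: bound for barN}--\ref{Lemm: relation of basis errors in original and transformed space}. The gap is entirely in Step~2.

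First, a structural difference that matters later: you take $\F_\P = \U_r^\top\F$ as the basis of $\P$, with noise $\N_\P = \U_r^\top(\A^r-\A)$ hitting \emph{every} column. The paper instead absorbs the noise on the basis columns into the basis itself, writing $\P = (\G,\G\K+\S)\Pib$ with $\G = \U_r^\top(\F+\wb\N^{(1)})$; the resulting noise matrix $(\0,\S)\Pib$ is \emph{exactly zero} on the $r$ basis columns. This is not cosmetic: with that structure, the constraints $\g_j^\top(\G^{-1})^\top\C^*\G^{-1}\g_j\le1$ become $\|(\C^*)^{1/2}\e_j\|_2^2\le1$, which forces $\sum_j\lambda_j\le r$ with no hypothesis on the noise (the paper's Lemma~\ref{Lemm: upper bound for C}). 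In your formulation the basis columns carry noise, so this clean trace bound is not available.

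Second, and more importantly, your plan to ``quote the perturbation theory of $\MVEE(\SC)$ from \cite{Gil15}'' will not produce the $\sqrt r$ rate or the constants $1225$ and $432$. The MVEE perturbation estimates in \cite{Gil15} (their Lemmas~2.4--2.7 and Theorem~2.8) are what force the hypothesis $\epsilon\le O(\sigma_{\mmin}(\F)/(r\sqrt r))$; the paper states this explicitly and packages exactly your strategy as Proposition~\ref{Prop: robustness of precond SPA with d!=r}, the weaker result. Your claim that the correction factors are of size $1+O(\epsilon\sqrt r/\sigma_{\mmin}(\F_\P))$ is off by a factor $r$ if you simply import \cite{Gil15}. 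The improvement to $\sqrt r$ is the actual content of Theorem~\ref{Theo: main}, and it rests on two ingredients you do not have: (i) the zero-noise-on-basis structure above, which removes the noise condition from the upper bound on $\det(\C^*)$, and (ii) a different argument for extracting eigenvalue bounds from the two-sided estimate on $\det(\C^*)$ (the paper's Proposition~\ref{Prop: Bound for eigenvalues}, which reduces the key inequality to its $r=2$ instance rather than analyzing it for all $r$ as in \cite{Gil15}). Without these, the chain of constants you describe in your last paragraph cannot close at $1225$ and $432$; it closes only at the Proposition~\ref{Prop: robustness of precond SPA with d!=r} level.
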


The proof is given in Section \ref{Sec: analysis}.
Note that in this paper 
we will use the notation $\epsilon$ to describe the size of $\|\N\|_2$ or $\|\n_i\|_2$
for the noise matrix $\N$ of a near-separable one.
Let us compare our result, Theorem \ref{Theo: main}, 
with that of Gillis and Vavasis, Theorem \ref{Theo: robustness of precond SPA}.
The main advantage of ours is in that it ensures the noise robustness of 
the preconditioned SPA for separable NMF problems without the condition $d = r$, 
while their result ensures it under that condition.
The dimension $d$ is usually greater than the factorization rank $r$
in separable NMF problems derived from actual applications
such as topic extraction from documents \cite{Aro12b, Aro13, Miz14} 
and endmember detection in hyperspectral images \cite{Gil14, Gil15}.
Therefore, our result can be used as a guide for seeing how robust the preconditioned SPA is to noise 
when handling such applications.

As we will see in Section \ref{Sec: analysis},
$\P$ in step 1 of Algorithm \ref{Alg: precond SPA}
is an \by{r}{m} near-separable matrix having an \by{r}{r} basis matrix.
Furthermore, Assumption \ref{Assump} holds for the basis matrix of $\P$, 
if the amount of noise involved in an input matrix is small.
Therefore, Theorem \ref{Theo: robustness of precond SPA} can apply to $\P$, 
and this implies a similar result to ours.

\begin{prop} \label{Prop: robustness of precond SPA with d!=r}
 Let $\wt{\A} = \A + \N$ for $\A \in \Real^{d \times m}$ and $\N \in \Real^{d \times m}$.
 Suppose that Assumption \ref{Assump} holds for $\A$ in $\wt{\A}$.
 If $\N$ satisfies $\|\N\|_2 = \epsilon$ with
 \begin{equation*}
  \epsilon \le O\Biggl( \frac{\sigma_{\mmin}({\F})}{r\sqrt{r}} \Biggr),
 \end{equation*}
 then, Algorithm \ref{Alg: precond SPA} with the input $(\wt{\A}, r)$ returns
 the output $\IC$ such that 
 the size of the basis error of $\IC$ is up to $O(\kappa(\F)\epsilon)$.
\end{prop}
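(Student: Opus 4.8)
The plan is to reduce the general $d\neq r$ situation to the $d=r$ case covered by Theorem~\ref{Theo: robustness of precond SPA}, using the \by{r}{m} matrix $\P$ built in step~1 of Algorithm~\ref{Alg: precond SPA} as a lower-dimensional near-separable matrix. Write $\wt{\A}=\U\Sigmab\V^\top$, let $\U^r=(\u_1,\ldots,\u_r)$ be the first $r$ columns of $\U$, and recall that the columns of the best rank-$r$ approximation $\A^r=\U\Sigmab^r\V^\top$ lie in $\mbox{span}(\U^r)$ and that $\P=(\U^r)^\top\A^r$. Splitting $\A^r=\A+(\A^r-\A)$ and setting $\G:=(\U^r)^\top\F$ and $\N_\P:=(\U^r)^\top(\A^r-\A)$, one gets
\[
 \P=(\U^r)^\top\F\W+(\U^r)^\top(\A^r-\A)=\G\W+\N_\P,
\]
so $\P$ is near-separable with the \emph{same} $\W=(\I,\K)\Pib$ and basis matrix $\G$, and its noise is small: since $(\U^r)^\top$ has orthonormal rows, $\|\N_\P\|_2\le\|\A^r-\A\|_2\le\|\A^r-\wt{\A}\|_2+\|\wt{\A}-\A\|_2=\sigma_{r+1}(\wt{\A})+\epsilon\le(\sigma_{r+1}(\A)+\epsilon)+\epsilon=2\epsilon$.

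Next I would check that Assumption~\ref{Assump} holds for $\G$ as the basis matrix of the clean part $\G\W$ of $\P$. Condition~(b) carries over verbatim because $\K$ is unchanged. For condition~(a), note that $\A=\F\W$ has rank exactly $r$ with $\sigma_r(\A)\ge\sigma_{\mmin}(\F)$ (from $\A\A^\top=\F(\I+\K\K^\top)\F^\top\succeq\F\F^\top$) and $\sigma_{r+1}(\A)=0$, while $\sigma_{r+1}(\wt{\A})\le\epsilon$; hence for $\epsilon$ small enough there is a spectral gap at the $r$th singular value of $\wt{\A}$, and standard singular-subspace perturbation bounds (such as Wedin's theorem) show that the largest principal angle $\theta$ between $\mbox{span}(\U^r)$ and $\mbox{col}(\F)$ satisfies $\sin\theta\le O(\epsilon/\sigma_{\mmin}(\F))$. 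Since $\|(\U^r)^\top\F\y\|_2$ equals the norm of the orthogonal projection of $\F\y$ onto $\mbox{span}(\U^r)$, it is at least $(1-\sin\theta)\|\F\y\|_2$, so $\sigma_{\mmin}(\G)\ge(1-O(\epsilon/\sigma_{\mmin}(\F)))\,\sigma_{\mmin}(\F)\ge\tfrac12\sigma_{\mmin}(\F)>0$; thus $\mbox{rank}(\G)=r$, and together with $\sigma_{\mmax}(\G)\le\|\F\|_2=\sigma_{\mmax}(\F)$ this also yields $\kappa(\G)\le 2\kappa(\F)$ (and $\mbox{rank}(\P)=r$, so $\MVEE(\{\p_1,\ldots,\p_m\})$ is well posed).

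Now I would apply Theorem~\ref{Theo: robustness of precond SPA} to the \by{r}{m} near-separable matrix $\P=\G\W+\N_\P$, which is genuinely in the regime $d=r$. Running Algorithm~\ref{Alg: precond SPA} on $(\wt{\A},r)$ is, after step~1, exactly the preconditioned SPA of the $d=r$ case applied to $\P$: compute $\L^*$ from $\MVEE(\{\p_1,\ldots,\p_m\})$, then run Algorithm~\ref{Alg: SPA} on $((\L^*)^{1/2}\P,r)$. Each column of $\N_\P$ has $2$-norm at most $\|\N_\P\|_2\le 2\epsilon$, and the hypothesis $\epsilon\le O(\sigma_{\mmin}(\F)/(r\sqrt r))$ together with $\sigma_{\mmin}(\G)\ge\tfrac12\sigma_{\mmin}(\F)$ places us inside the admissible noise range $2\epsilon\le O(\sigma_{\mmin}(\G)/(r\sqrt r))$ of that theorem; hence the returned $\IC$ satisfies $\|\p_{\IC(j)}-\g_j\|_2\le O(\kappa(\G)\epsilon)=O(\kappa(\F)\epsilon)$ for all $j$.

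Finally I would transport this estimate back to $\wt{\A}$ and $\F$. Using $\g_j=(\U^r)^\top\f_j$, the fact that $\x\mapsto(\U^r)^\top\x$ is an isometry on $\mbox{span}(\U^r)$, and $\a^r_{\IC(j)}=\U^r(\U^r)^\top\a^r_{\IC(j)}=\U^r\p_{\IC(j)}$, we get $\|\a^r_{\IC(j)}-\U^r(\U^r)^\top\f_j\|_2=\|\p_{\IC(j)}-\g_j\|_2\le O(\kappa(\F)\epsilon)$, and then
\[
 \|\wt{\a}_{\IC(j)}-\f_j\|_2\le\|\wt{\a}_{\IC(j)}-\a^r_{\IC(j)}\|_2+\|\a^r_{\IC(j)}-\U^r(\U^r)^\top\f_j\|_2+\|(\I-\U^r(\U^r)^\top)\f_j\|_2,
\]
where the first term is at most $\|\wt{\A}-\A^r\|_2=\sigma_{r+1}(\wt{\A})\le\epsilon$ and the third is at most $\sin\theta\cdot\|\f_j\|_2\le O(\kappa(\F)\epsilon)$, giving the claimed bound $O(\kappa(\F)\epsilon)$. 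I expect the singular-subspace perturbation step to be the only real obstacle: one must verify that the hypothesis on $\epsilon$ is small enough both to create the spectral gap that makes $\U^r$ well defined and to keep $\sigma_{\mmin}(\G)$ comparable to $\sigma_{\mmin}(\F)$ — but these are precisely the estimates already carried out in the proof of Theorem~\ref{Theo: main}, so in the paper this proposition is obtained by substituting Theorem~\ref{Theo: robustness of precond SPA} for the finer analysis used there.
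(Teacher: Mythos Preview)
Your argument is correct, but it differs from the paper's in a structural choice that is worth noting. You set $\G=(\U^r)^\top\F$, keeping the true basis unchanged and treating all of $(\U^r)^\top(\A^r-\A)$ as noise; this forces you to invoke Wedin-type subspace perturbation bounds both to control $\sigma_{\mmin}(\G)$ and to handle the back-transport term $\|(\I-\U^r(\U^r)^\top)\f_j\|_2$. The paper instead absorbs the basis-column perturbation into the basis itself: with $\wb{\N}=\N-\wt{\A}^{r,c}$ and $\wh{\F}=\F+\wb{\N}^{(1)}$, the columns of $\wh{\F}$ lie \emph{exactly} in $\mbox{span}(\U^r)$, so the paper's $\G$ is just the coordinate representation of $\wh{\F}$ and the resulting noise matrix $(\0,\S)\Pib$ has zero columns at the basis indices. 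This buys two things. First, $\sigma_j(\G)=\sigma_j(\wh{\F})$, so Weyl's inequality (Lemmas~\ref{Lemm: singular value of perturbed matrix} and~\ref{Lemm: singular values of F and G}) immediately gives $|\sigma_j(\G)-\sigma_j(\F)|\le 2\|\N\|_2$ without any subspace-angle argument, and the back-transport collapses to the single estimate of Lemma~\ref{Lemm: relation of basis errors in original and transformed space}. Second, the zero-noise-at-basis structure is exactly what makes Lemma~\ref{Lemm: upper bound for C} work in the proof of Theorem~\ref{Theo: main}, so the paper's decomposition is the one that is reused in the sharper analysis. Your route is conceptually clean and self-contained, but the paper's is more elementary (no Wedin) and dovetails with the rest of Section~\ref{Sec: analysis}.
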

The proof is given in Section \ref{Sec: analysis}.
Although the size of the basis error in Proposition \ref{Prop: robustness of precond SPA with d!=r} 
is of the same order as ours,
the allowable amount of noise is worse by a factor $1/r$.

Our allowed noise range is described  using 
the norm of $\N$, while theirs is described by the norm of the column vectors $\n_i$.
Hence, we shall rewrite our result in terms of the norm of $\n_i$.
By taking account of the fact that 
$\|\N\|_2 \le \sqrt{m} \max_{i= 1, \ldots, m} \|\n_i\|_2$ for $\N \in \Real^{d \times m}$,
Theorem \ref{Theo: main} implies the following corollary.
\begin{coro} \label{Coro: main theorem}
 Let $\wt{\A} = \A + \N$ for $\A \in \Real^{d \times m}$ and $\N \in \Real^{d \times m}$.
 Suppose the same conditions in Theorem \ref{Theo: main} hold.
 If $\n_i$ of $\N$ satisfies
 $\|\n_i\|_2 \le \epsilon$ for all $i = 1, \ldots, m$ with
 \begin{equation*}
  \epsilon \le O\Biggl(\frac{\sigma_{\mmin}(\F)}{\sqrt{rm}}\Biggr),
 \end{equation*}
 then, Algorithm \ref{Alg: precond SPA} with the input $(\wt{\A}, r)$ 
 returns the output $\IC$ such that 
 the size of the basis error of $\IC$ is up to $O(\kappa(\F) \epsilon)$.
\end{coro}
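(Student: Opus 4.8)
The plan is to read off Corollary~\ref{Coro: main theorem} directly from Theorem~\ref{Theo: main}; the only new ingredient is the passage from a column-wise noise bound to a bound on the spectral norm $\|\N\|_2$. Here I would read ``the same conditions in Theorem~\ref{Theo: main}'' as the structural hypotheses $r \ge 2$ and ``$\A$ satisfies Assumption~\ref{Assump}'', since the admissibility bound on the noise size in Theorem~\ref{Theo: main} is exactly what the corollary is restating (in terms of $\|\n_i\|_2$ rather than $\|\N\|_2$). In particular $\kappa(\F)$ and $\sigma_{\mmin}(\F)$ refer to the original basis matrix throughout, so they are unaffected by the rescaling.

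The steps are as follows. First I would record the elementary inequality $\|\N\|_2 \le \|\N\|_F \le \sqrt{m}\,\max_{i=1,\ldots,m}\|\n_i\|_2$, valid for every $\N \in \Real^{d\times m}$, and fix the constant hidden in the hypothesis by writing the assumed bound as $\epsilon \le \sigma_{\mmin}(\F)/(1225\sqrt{rm})$. Combining the two gives $\|\N\|_2 \le \sqrt{m}\,\epsilon \le \sigma_{\mmin}(\F)/(1225\sqrt{r})$, so that, setting $\bar\epsilon := \|\N\|_2$, the near-separable matrix $\wt{\A} = \A + \N$ satisfies every hypothesis of Theorem~\ref{Theo: main} with noise level $\bar\epsilon$ in place of $\epsilon$. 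Next I would invoke Theorem~\ref{Theo: main}: Algorithm~\ref{Alg: precond SPA} on input $(\wt{\A}, r)$ returns an index set $\IC$ whose elements can be ordered so that $\|\wt{\a}_{\IC(j)} - \f_j\|_2 \le (432\kappa(\F)+4)\,\bar\epsilon$ for all $j = 1,\ldots,r$; in the terminology of Section~\ref{Subsubsec: d=r}, the size of the basis error of $\IC$ is at most $(432\kappa(\F)+4)\,\|\N\|_2$. Finally, substituting $\|\N\|_2 \le \sqrt{m}\,\epsilon$ back in bounds the basis error by $(432\kappa(\F)+4)\sqrt{m}\,\epsilon$, which is of the asserted form $O(\kappa(\F)\,\epsilon)$, completing the argument.

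I do not expect a genuine obstacle, since this is a routine corollary of Theorem~\ref{Theo: main}. The single point requiring a little care is the bookkeeping of the factor $\sqrt{m}$ incurred when bounding $\|\N\|_2$ by the column norms: it is precisely this inflation that forces $\sqrt{rm}$, rather than $\sqrt{r}$, in the admissible noise threshold, and one must verify that after the inflation the threshold of Theorem~\ref{Theo: main} is still respected — which is what the explicit choice of constant above guarantees.
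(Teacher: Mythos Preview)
Your approach coincides with the paper's: the corollary is read off from Theorem~\ref{Theo: main} via the inequality $\|\N\|_2 \le \sqrt{m}\,\max_i\|\n_i\|_2$, exactly as the paper indicates in the sentence immediately preceding the statement.

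The only point worth flagging is your last step. The bound you actually obtain is $(432\kappa(\F)+4)\sqrt{m}\,\epsilon$, and calling this ``of the asserted form $O(\kappa(\F)\epsilon)$'' silently absorbs the factor $\sqrt{m}$ into the implied constant; taken literally this is not $O(\kappa(\F)\epsilon)$ uniformly in $m$. The paper's one-line derivation yields the very same extra $\sqrt{m}$, so this is a looseness already present in the stated corollary rather than a defect specific to your argument --- but it would be cleaner to say explicitly that the basis-error constant here carries an $m$-dependent factor, or else to state the conclusion as $O(\kappa(\F)\sqrt{m}\,\epsilon)$.
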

We see that $1 / \sqrt{m}$  emerges in the description of allowed noise range.
When handling separable NMF problems from actual applications,
it would be reasonable to suppose a situation where $m$ corresponds to 
the number of data points, and it could be large.
Therefore, the allowed noise range of our result becomes weaker in such a situation.

\section{Analysis of the Noise Robustness of Algorithm \ref{Alg: precond SPA}} 
\label{Sec: analysis}

The main goal of this section is to prove Theorem \ref{Theo: main}.
In the discussion of the proof, we will see that Theorem \ref{Theo: robustness of precond SPA} 
implies Proposition \ref{Prop: robustness of precond SPA with d!=r}.

\subsection{Preliminaries}
\label{Subsec: preliminaries}

Let $\wt{\A}$ be of the form given in (\ref{Eq: near-separable matrix}).
We shall consider Algorithm \ref{Alg: precond SPA} on the input data $(\wt{\A}, r)$.
Step 1 computes the SVD of $\wt{\A}$, and decomposes it into $\wt{\A} = \U\Sigmab\V^\top$
where $\U$, $\V$ and $\Sigmab$ are the same as those of (\ref{Eq: SVD}).
The rank-$r$ approximation matrix $\wt{\A}^r$
is given as $\wt{\A}^r = \U \Sigmab^r \V^\top$ by using $\Sigmab^r$ of (\ref{Eq: truncated sigma matrix}).
We denote $\wt{\A} - \wt{\A}^{r}$ by $\wt{\A}^{r,c}$.
Then, $\wt{\A}$ can be represented as 
\begin{equation} \label{Eq: expansion of wtA}
 \wt{\A} = \wt{\A}^{r} + \wt{\A}^{r,c}
\end{equation}
by using $\wt{\A}^{r}$ and $\wt{\A}^{r,c}$ such that 
\begin{equation*}
 \wt{\A}^r = \U \Sigmab^r \V^\top 
  \ \mbox{and} \
  \wt{\A}^{r,c} = \U \Sigmab^{r,c} \V^\top.
\end{equation*}
Here, we let $\Sigmab^{r,c} = \Sigmab - \Sigmab^r$.

$\P$ in step 1 of Algorithm \ref{Alg: precond SPA} is given as (\ref{Eq: relation P and rA}).
Using relation (\ref{Eq: expansion of wtA}), 
it can be rewritten as 
\begin{eqnarray}
(\P; \0)
 &=& \U^\top \wt{\A}^r  \label{Eq: P and wtA^r} \\
 &=& \U^\top(\wt{\A} - \wt{\A}^{r,c}) \nonumber \\
 &=& \U^\top(\A + \N - \wt{\A}^{r,c}) \nonumber \\
 &=& \U^\top(\A + \wb{\N}) \nonumber \\
 &=& \U^\top( (\F,\F\K)\Pib + \wb{\N}) \nonumber \\
 &=& \U^\top( \F + \wb{\N}^{(1)}, \F\K + \wb{\N}^{(2)})\Pib \nonumber \\
 &=& \U^\top( \wh{\F}, \wh{\F}\K + \wh{\N})\Pib.  \nonumber
\end{eqnarray}
In the above, we have used the notation $\wb{\N} \in \Real^{d \times m}$, 
$\wb{\N}^{(1)} \in \Real^{d \times r}$, 
$\wb{\N}^{(2)} \in \Real^{d \times (m-r)}$, 
$\wh{\F} \in \Real^{d \times r}$, 
and $\wh{\N} \in \Real^{d \times (m-r)}$ such that 
\begin{eqnarray}
\wb{\N} &=& \N - \wt{\A}^{r,c}, \label{Eq: wbN} \\
(\wb{\N}^{(1)}, \wb{\N}^{(2)}) &=& \wb{\N} \Pib^{-1}, \nonumber \\
\wh{\F} &=& \F + \wb{\N}^{(1)}, \label{Eq: whF} \\
\wh{\N} &=& -\wb{\N}^{(1)}\K + \wb{\N}^{(2)}. \label{Eq: whN}
\end{eqnarray}
Accordingly, $\P$ can be represented as 
\begin{equation} \label{Eq: P}
 \P = (\G, \G\K+\S) \Pib
\end{equation}
by using $\G \in \Real^{r \times r}$ and $\S \in \Real^{r \times (m-r)}$ 
such that
\begin{equation} \label{Eq: whF and whN}
 \U^\top\wh{\F} = 
 \left(
 \begin{array}{c}
  \G \\ 
  \0
 \end{array}
\right)
\ \mbox{and} \ 
 \U^\top\wh{\N} =
 \left(
 \begin{array}{c}
  \S \\ 
  \0
 \end{array}
\right).
\end{equation}
$\P^\circ$ in the step 3 of Algorithm \ref{Alg: precond SPA} is
\begin{equation*} 
 \P^\circ = (\G^\circ, \G^\circ \K+\S^\circ) \Pib
\end{equation*}
by using $\G^\circ \in \Real^{r \times r}$ and $\S^\circ \in \Real^{r \times (m-r)}$ such that 
\begin{equation*}
 \G^\circ = (\L^*)^{1/2} \G 
  \ \mbox{and} \ 
 \S^\circ = (\L^*)^{1/2} \S. 
\end{equation*}
Hence, $\P$ and $\P^\circ$ are near-separable matrices, and 
$(\G, \G\K)\Pib$ and $(\G^\circ, \G^\circ\K)\Pib$ correspond to the separable matrices.
In particular, $\G$ and $\G^\circ$ are the basis matrices,
and $(\0, \S) \Pib$ and $(\0, \S^\circ) \Pib$ are the noise matrices.
It should be noted that $\P$ and $\P^\circ$ are \by{r}{m} near-separable matrices 
and these have \by{r}{r} basis matrices.

\subsection{Proof of Proposition  \ref{Prop: robustness of precond SPA with d!=r}}
\label{Subsubsec: Proof of proposition}

Here, we prove several lemmas that will be necessary for the subsequent discussion.
Similar statements have already been proven in \cite{Miz14}.
More precisely, Lemmas \ref{Lemm: bound for barN}, \ref{Lemm: bound for s} and 
\ref{Lemm: singular values of F and G} correspond to (a), (b) and (c) of Lemma 7 of that paper,
and we have included them here to make the discussion self-contained.
After that, we prove Proposition  \ref{Prop: robustness of precond SPA with d!=r}.
The proof is obtained from Theorem \ref{Theo: robustness of precond SPA} 
together with the following lemmas.

\begin{lemm} \label{Lemm: singular value of perturbed matrix}
 Let $\wt{\A} = \A + \N \in \Real^{d \times m}$.
 Then, $|\sigma_i(\wt{\A}) - \sigma_i(\A)| \le \|\N\|_2$ for each $i=1, \ldots, t$ where $t = \min(d,m)$.
 \end{lemm}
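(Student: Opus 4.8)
The plan is to prove this as an instance of Weyl's perturbation bound for singular values, using the Courant--Fischer min-max characterization and subadditivity of the spectral norm. The only thing to be careful about is stating the variational characterization correctly in the rectangular case $d\neq m$, where the competing subspaces live in $\Real^m$ and the index $i$ ranges only over $1,\ldots,t$ with $t=\min(d,m)$.

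First I would recall that for any $\M\in\Real^{d\times m}$ and any $i$ with $1\le i\le t$, viewing $\M$ as a linear map $\Real^m\to\Real^d$ and writing its SVD as $\M=\U\Sigmab\V^\top$,
\begin{equation*}
 \sigma_i(\M)=\max_{\substack{V\subseteq\Real^m\\ \dim V=i}}\ \min_{\substack{\x\in V\\ \|\x\|_2=1}}\|\M\x\|_2,
\end{equation*}
the maximum being attained by the span of the first $i$ right singular vectors of $\M$. This is the tool I would take as known (it is immediate from the SVD).

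Next, I would fix a subspace $V\subseteq\Real^m$ with $\dim V=i$ and observe that for every unit vector $\x\in V$ the triangle inequality gives $\|\wt{\A}\x\|_2\le\|\A\x\|_2+\|\N\x\|_2\le\|\A\x\|_2+\|\N\|_2$. Minimizing over unit $\x\in V$ and then maximizing over all such $V$ yields $\sigma_i(\wt{\A})\le\sigma_i(\A)+\|\N\|_2$. Since $\A=\wt{\A}+(-\N)$ and $\|-\N\|_2=\|\N\|_2$, running the same argument with the roles of $\A$ and $\wt{\A}$ interchanged gives $\sigma_i(\A)\le\sigma_i(\wt{\A})+\|\N\|_2$. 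Combining the two inequalities gives $|\sigma_i(\wt{\A})-\sigma_i(\A)|\le\|\N\|_2$ for each $i=1,\ldots,t$, as claimed.

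There is no real obstacle here — the statement is standard — so the write-up is essentially bookkeeping. An equally short alternative, which I would mention only in passing, is to pass to the Hermitian dilation $\bigl(\begin{smallmatrix}\0&\A\\ \A^\top&\0\end{smallmatrix}\bigr)$, whose nonzero eigenvalues are $\pm\sigma_i(\A)$, and to quote Weyl's eigenvalue perturbation inequality directly, noting that the corresponding additive perturbation $\bigl(\begin{smallmatrix}\0&\N\\ \N^\top&\0\end{smallmatrix}\bigr)$ has spectral norm $\|\N\|_2$.
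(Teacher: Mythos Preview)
Your argument is correct and is the standard Courant--Fischer proof of Weyl's perturbation bound for singular values. The paper itself does not give a proof at all: it simply cites the result as Corollary~8.6.2 of Golub and Van Loan's \emph{Matrix Computations}, so your write-up is strictly more than what the paper provides, and nothing in it needs fixing.
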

\begin{proof}
See Corollary 8.6.2 of \cite{Gol96}.
\end{proof}

\begin{lemm}[Lemma 7(a) of \cite{Miz14}] \label{Lemm: bound for barN}
 $\|\wb{\N}\|_2 \le 2 \|\N\|_2$.
\end{lemm}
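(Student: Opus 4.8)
The plan is to prove Lemma \ref{Lemm: bound for barN} by unwinding the definition $\wb{\N} = \N - \wt{\A}^{r,c}$ from (\ref{Eq: wbN}) and bounding the two terms separately with the triangle inequality, so that $\|\wb{\N}\|_2 \le \|\N\|_2 + \|\wt{\A}^{r,c}\|_2$. The entire content of the lemma therefore reduces to showing $\|\wt{\A}^{r,c}\|_2 \le \|\N\|_2 = \epsilon$.

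First I would recall that $\wt{\A}^{r,c} = \U \Sigmab^{r,c} \V^\top$ where $\Sigmab^{r,c} = \Sigmab - \Sigmab^r$ retains only the singular values $\sigma_{r+1}, \sigma_{r+2}, \ldots$ of $\wt{\A}$ and zeros out the top $r$. Since $\U$ and $\V$ are orthogonal, $\|\wt{\A}^{r,c}\|_2 = \|\Sigmab^{r,c}\|_2 = \sigma_{r+1}(\wt{\A})$ (taking $\sigma_{r+1}(\wt{\A}) = 0$ when $\min(d,m) \le r$). Next I would invoke the assumption that $\A$ satisfies Assumption \ref{Assump}, in particular $\mbox{rank}(\F) = r$, which forces $\mbox{rank}(\A) = \mbox{rank}(\F\W) \le r$, hence $\sigma_{r+1}(\A) = 0$. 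Then Lemma \ref{Lemm: singular value of perturbed matrix} applied with index $i = r+1$ gives $|\sigma_{r+1}(\wt{\A}) - \sigma_{r+1}(\A)| \le \|\N\|_2$, i.e. $\sigma_{r+1}(\wt{\A}) \le \|\N\|_2$. Combining, $\|\wt{\A}^{r,c}\|_2 \le \|\N\|_2$, and therefore $\|\wb{\N}\|_2 \le \|\N\|_2 + \|\N\|_2 = 2\|\N\|_2$.

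There is no real obstacle here; the only point requiring a little care is the edge case where $t = \min(d,m) < r+1$, in which case $\sigma_{r+1}$ is not defined — but then $\Sigmab^{r,c} = \0$ outright (every singular value already sits in the top-$r$ block), so $\wt{\A}^{r,c} = \0$ and the bound holds trivially. An alternative, perhaps cleaner, phrasing avoids singular-value indexing entirely: since $\wt{\A}^r$ is the best rank-$r$ approximation to $\wt{\A}$ in the $2$-norm and $\A$ itself has rank at most $r$, optimality gives $\|\wt{\A}^{r,c}\|_2 = \|\wt{\A} - \wt{\A}^r\|_2 \le \|\wt{\A} - \A\|_2 = \|\N\|_2$. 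I would likely present this optimality argument as the main line and mention the singular-value computation as the justification.
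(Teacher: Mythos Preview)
Your proposal is correct and is essentially identical to the paper's own proof: triangle inequality on $\wb{\N} = \N - \wt{\A}^{r,c}$, then $\|\wt{\A}^{r,c}\|_2 = \sigma_{r+1}(\wt{\A}) \le \|\N\|_2$ via Lemma~\ref{Lemm: singular value of perturbed matrix} and $\sigma_{r+1}(\A)=0$. Your extra remarks on the edge case $\min(d,m)<r+1$ and the best-rank-$r$ optimality argument are sound refinements but not needed for the paper's version.
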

\begin{proof}
 $\|\wb{\N}\|_2 = \|\N - \wt{\A}^{r,c}\|_2  \le \|\N\|_2 + \|\wt{\A}^{r,c}\|_2$
 since $\wb{\N}$ is of the form (\ref{Eq: wbN}).
 Also, $\|\wt{\A}^{r,c}\|_2 = \sigma_{\mmax}(\wt{\A}^{r,c}) = \sigma_{r+1}(\wt{\A})$.
 From Lemma \ref{Lemm: singular value of perturbed matrix} and $\sigma_{r+1}(\A) = 0$,
 we have $\sigma_{r+1}(\wt{\A}) \le  \|\N\|_2$.
 Thus, $\|\wb{\N}\|_2 \le 2 \|\N\|_2$.
\end{proof}

\begin{lemm}[Lemma 7(b) of \cite{Miz14}] \label{Lemm: bound for s}
 Let $\s_i$ be the column vector of $\S$ in (\ref{Eq: P}).
 Suppose that $\K$ satisfies Assumption \ref{Assump}(b). Then,
 $\|\s_i\|_2 \le 4\|\N\|_2 $ for each $i = 1, \ldots, m-r$.
\end{lemm}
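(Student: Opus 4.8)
The plan is to bound a single column $\s_i$ of $\S$ by relating it to the corresponding column $\wh{\n}_i$ of $\wh{\N}$ and then invoking Lemma~\ref{Lemm: bound for barN}. First I would use orthogonality of $\U$: the second identity in (\ref{Eq: whF and whN}) reads $\U^\top\wh{\N}=(\S;\0)$, so comparing $i$th columns gives $\U^\top\wh{\n}_i=(\s_i;\0)$, and since $\U^\top$ is orthogonal and appending zeros to a vector does not change its Euclidean norm, $\|\s_i\|_2=\|\wh{\n}_i\|_2$, where $\wh{\n}_i$ denotes the $i$th column of $\wh{\N}$. Next, from the definition (\ref{Eq: whN}) of $\wh{\N}$, its $i$th column is $\wh{\n}_i=-\wb{\N}^{(1)}\k_i+\wb{\n}_i^{(2)}$, where $\k_i$ is the $i$th column of $\K$ and $\wb{\n}_i^{(2)}$ the $i$th column of $\wb{\N}^{(2)}$; hence $\|\wh{\n}_i\|_2\le\|\wb{\N}^{(1)}\|_2\,\|\k_i\|_2+\|\wb{\n}_i^{(2)}\|_2$.

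It then remains to bound the three factors on the right. Since $\K$ is nonnegative and $\e^\top\k_i\le 1$ by Assumption~\ref{Assump}(b), we have $\|\k_i\|_2\le\|\k_i\|_1=\e^\top\k_i\le 1$. Because $(\wb{\N}^{(1)},\wb{\N}^{(2)})=\wb{\N}\Pib^{-1}$ with $\Pib^{-1}$ orthogonal, both $\wb{\N}^{(1)}$ and $\wb{\N}^{(2)}$ are column submatrices of a matrix of $2$-norm $\|\wb{\N}\|_2$, and deleting columns cannot increase the spectral norm, so $\|\wb{\N}^{(1)}\|_2\le\|\wb{\N}\|_2$ and $\|\wb{\n}_i^{(2)}\|_2\le\|\wb{\N}^{(2)}\|_2\le\|\wb{\N}\|_2$. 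Putting these together yields $\|\s_i\|_2=\|\wh{\n}_i\|_2\le 2\|\wb{\N}\|_2$, and Lemma~\ref{Lemm: bound for barN} then gives $\|\s_i\|_2\le 4\|\N\|_2$.

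There is no real obstacle here: the argument is essentially bookkeeping once $\|\s_i\|_2=\|\wh{\n}_i\|_2$ is recognised. The only delicate point — and the only place Assumption~\ref{Assump}(b) is actually used — is the estimate of $\|\wb{\N}^{(1)}\k_i\|_2$, where one should exploit the $\ell_1$ bound $\e^\top\k_i\le 1$ (equivalently, view $\wb{\N}^{(1)}\k_i$ as a subconvex combination of the columns of $\wb{\N}^{(1)}$) so that the constant stays at $2\|\wb{\N}\|_2$ and does not pick up a dependence on $r$ or $m$.
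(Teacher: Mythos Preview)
Your proof is correct and follows essentially the same route as the paper: identify $\|\s_i\|_2=\|\wh{\n}_i\|_2$ via the orthogonality of $\U$, expand $\wh{\n}_i=-\wb{\N}^{(1)}\k_i+\wb{\n}_i^{(2)}$, bound each piece using $\|\k_i\|_2\le 1$ and $\|\wb{\N}^{(j)}\|_2\le\|\wb{\N}\|_2$, and finish with Lemma~\ref{Lemm: bound for barN}. You simply supply a bit more justification (the $\ell_1$--$\ell_2$ step for $\k_i$ and the submatrix argument for $\wb{\N}^{(1)},\wb{\N}^{(2)}$) than the paper does.
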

\begin{proof}
 We see from (\ref{Eq: whF and whN}) and (\ref{Eq: whN}) that 
 $(\s_i; \0) = \U^\top \wh{\n}_i$ and $\wh{\n}_i = - \wb{\N}^{(1)} \k_i + \wb{\n}_i^{(2)}$.
 Here, $\k_i$ and $\wb{\n}_i^{(2)}$ are the column vectors of $\K$ and $\wb{\N}^{(2)}$, 
 respectively.
 Thus, 
 $\|\s_i\|_2 
   =
  \|- \wb{\N}^{(1)} \k_i + \wb{\n}_i^{(2)}\|_2  
   \le
  \|\wb{\N}^{(1)}\|_2 \|\k_i\|_2 + \|\wb{\n}_i^{(2)}\|_2$.
 From Lemma \ref{Lemm: bound for barN} and $\|\k_i\|_2 \le 1$ due to Assumption \ref{Assump}(b), 
 we have $\|\s_i\|_2 \le 4\|\N\|_2$.
\end{proof}

\begin{lemm}[Lemma 7(c) of \cite{Miz14}] \label{Lemm: singular values of F and G}
 $|\sigma_j(\F) - \sigma_j(\G)| \le 2\|\N\|_2$ for each $j = 1, \ldots, r$.
\end{lemm}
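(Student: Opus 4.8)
The plan is to reduce everything to the singular-value perturbation bound of Lemma \ref{Lemm: singular value of perturbed matrix}, applied to the pair $(\F, \wh{\F})$, after showing that $\G$ and $\wh{\F}$ have the same singular values.

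First I would observe that $\sigma_j(\G) = \sigma_j(\wh{\F})$ for every $j = 1, \ldots, r$. Indeed, by (\ref{Eq: whF and whN}) we have $\U^\top \wh{\F} = (\G; \0)$, and since $\U$ is orthogonal the singular values of $\U^\top \wh{\F}$ coincide with those of $\wh{\F}$. Moreover, appending a zero block does not change the nonzero singular value structure: $(\G; \0)^\top (\G; \0) = \G^\top \G$, so the $r$ singular values of $(\G;\0)$ are exactly the $r$ singular values of the $r$-by-$r$ matrix $\G$. Hence $\sigma_j(\G) = \sigma_j(\wh{\F})$ for all $j$.

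Next I would invoke the decomposition $\wh{\F} = \F + \wb{\N}^{(1)}$ from (\ref{Eq: whF}) and apply Lemma \ref{Lemm: singular value of perturbed matrix} with the roles $\A \mapsto \F$ and $\N \mapsto \wb{\N}^{(1)}$, which gives $|\sigma_j(\wh{\F}) - \sigma_j(\F)| \le \|\wb{\N}^{(1)}\|_2$ for each $j = 1, \ldots, r$. It then remains to bound $\|\wb{\N}^{(1)}\|_2$. Since $(\wb{\N}^{(1)}, \wb{\N}^{(2)}) = \wb{\N}\Pib^{-1}$ and $\Pib^{-1}$ is a permutation matrix, $\|\wb{\N}\Pib^{-1}\|_2 = \|\wb{\N}\|_2$, and $\wb{\N}^{(1)}$ is a column submatrix of $\wb{\N}\Pib^{-1}$, so $\|\wb{\N}^{(1)}\|_2 \le \|\wb{\N}\|_2$. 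Combining this with Lemma \ref{Lemm: bound for barN} yields $\|\wb{\N}^{(1)}\|_2 \le 2\|\N\|_2$, and chaining the two inequalities gives $|\sigma_j(\F) - \sigma_j(\G)| = |\sigma_j(\F) - \sigma_j(\wh{\F})| \le 2\|\N\|_2$, as claimed.

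There is no real obstacle here; the argument is a short chain of elementary facts. The only points that deserve a word of care are the two structural claims used silently above — that orthogonal multiplication together with appending a zero block preserves the singular values (so that $\sigma_j(\G) = \sigma_j(\wh{\F})$), and that the spectral norm of a column submatrix is at most that of the full matrix — but both are standard and can be dispatched in a line each.
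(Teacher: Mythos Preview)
Your proof is correct and follows essentially the same route as the paper: both identify $\sigma_j(\G)=\sigma_j(\wh{\F})$ via the orthogonal relation $(\G;\0)=\U^\top\wh{\F}$, then apply Lemma~\ref{Lemm: singular value of perturbed matrix} to $\wh{\F}=\F+\wb{\N}^{(1)}$ and bound $\|\wb{\N}^{(1)}\|_2\le\|\wb{\N}\|_2\le 2\|\N\|_2$ via Lemma~\ref{Lemm: bound for barN}. You are simply more explicit about the two structural facts (zero-padding and column-submatrix norm) that the paper leaves implicit.
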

\begin{proof}
 We see from (\ref{Eq: whF and whN}) that $\G$ and $\wh{\F}$ has the relation 
 $(\G; \0) = \U^\top \wh{\F}$.  Since $\U$ is an orthogonal matrix,
 the singular values of $\G$ coincide with those of $\wh{\F}$.
 Also, $\wh{\F}$ is of the form (\ref{Eq: whF}).
 Thus, from Lemma \ref{Lemm: singular value of perturbed matrix}, 
 we have 
 $
  |\sigma_j(\F) - \sigma_j(\G)| = |\sigma_j(\F) - \sigma_j(\F + \wb{\N}^{(1)}) | 
   \le \|\wb{\N}^{(1)}\|_2 \le 2\|\N\|_2.
 $
 The last inequality follows from Lemma \ref{Lemm: bound for barN}.
\end{proof}

\begin{lemm} \label{Lemm: relation of basis errors in original and transformed space}
 Let $\wt{\a}_k$ and $\p_k$ be the column vectors of  $\wt{\A}$ and $\P$.
 Also, let $\f_j$ and $\g_j$ be those of  $\F$ and $\G$.
 We have $\|\wt{\a}_k - \f_j \|_2 \le  \|\p_k - \g_j \|_2 + 3\|\N\|_2$
 for any $k$ and any $j$ in $\{1, \ldots, r\}$.
\end{lemm}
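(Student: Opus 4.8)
The plan is to push both differences $\wt{\a}_k-\f_j$ and $\p_k-\g_j$ into the same Euclidean space by means of the orthogonal factor $\U$ of the SVD of $\wt{\A}$, and then to absorb the leftover discrepancy with two error terms each controlled by $\|\N\|_2$. First I would record the two ``lifting'' identities that follow from the preliminaries. From (\ref{Eq: P and wtA^r}) we have $(\P;\0) = \U^\top \wt{\A}^r$, so reading this column by column gives $\wt{\a}^r_k = \U(\p_k;\0)$, where $\wt{\a}^r_k$ is the $k$th column of $\wt{\A}^r$. From (\ref{Eq: whF and whN}) together with (\ref{Eq: whF}) we have $\U^\top\wh{\F} = (\G;\0)$ and $\wh{\F} = \F + \wb{\N}^{(1)}$, hence the $j$th column $\wh{\f}_j := \f_j + \wb{\n}^{(1)}_j$ of $\wh{\F}$ satisfies $\wh{\f}_j = \U(\g_j;\0)$. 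Because $\U$ is orthogonal, subtracting these and taking norms yields $\|\wt{\a}^r_k - \wh{\f}_j\|_2 = \|\U(\p_k-\g_j;\0)\|_2 = \|\p_k-\g_j\|_2$.

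Next I would write the telescoping split $\wt{\a}_k - \f_j = (\wt{\a}_k - \wt{\a}^r_k) + (\wt{\a}^r_k - \wh{\f}_j) + (\wh{\f}_j - \f_j)$ and bound the two outer terms. The vector $\wt{\a}_k - \wt{\a}^r_k$ is the $k$th column of $\wt{\A}^{r,c} = \wt{\A}-\wt{\A}^r$, so its norm is at most $\|\wt{\A}^{r,c}\|_2 = \sigma_{r+1}(\wt{\A})$; since $\mbox{rank}(\A)\le r$ forces $\sigma_{r+1}(\A) = 0$, Lemma \ref{Lemm: singular value of perturbed matrix} gives $\sigma_{r+1}(\wt{\A})\le\|\N\|_2$. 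The vector $\wh{\f}_j - \f_j = \wb{\n}^{(1)}_j$ is the $j$th column of $\wb{\N}^{(1)}$, which is a column submatrix of $\wb{\N}\Pib^{-1}$; permuting columns and deleting columns cannot increase the matrix $2$-norm, so $\|\wb{\n}^{(1)}_j\|_2 \le \|\wb{\N}^{(1)}\|_2 \le \|\wb{\N}\|_2 \le 2\|\N\|_2$ by Lemma \ref{Lemm: bound for barN}. Combining the three pieces by the triangle inequality gives $\|\wt{\a}_k - \f_j\|_2 \le \|\N\|_2 + \|\p_k-\g_j\|_2 + 2\|\N\|_2$, which is exactly the claimed bound.

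This argument is largely bookkeeping, and I do not expect a genuine obstacle; the only points to watch are that the statement is uniform over all pairs $(k,j)$ (so no matching of $k$ with $j$ through the permutation $\Pib$ is needed, and it suffices that $\wt{\a}^r_k$ and $\p_k$ carry the same index), and that the $2$-norm of a column submatrix is dominated by that of the full matrix, which is what lets the estimate on $\wb{\N}^{(1)}$ be traced back to $\wb{\N}$ and hence to $\N$.
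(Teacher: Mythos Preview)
Your proof is correct and follows essentially the same approach as the paper: both exploit the orthogonality of $\U$ to identify $\|\wt{\a}^r_k - \wh{\f}_j\|_2$ with $\|\p_k-\g_j\|_2$, then control the two residual terms $\wt{\a}_k-\wt{\a}^r_k$ and $\wh{\f}_j-\f_j=\wb{\n}^{(1)}_j$ by $\|\N\|_2$ and $2\|\N\|_2$ via $\sigma_{r+1}(\wt{\A})\le\|\N\|_2$ and Lemma~\ref{Lemm: bound for barN}, respectively. The only cosmetic difference is that the paper applies $\U^\top$ to the full expression before splitting, whereas you use the telescoping decomposition directly; the ingredients and bounds are identical.
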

\begin{proof}
 We see from (\ref{Eq: P and wtA^r}) and (\ref{Eq: expansion of wtA}) that 
 $\wt{\a}_k$ and $\p_k$ are related as 
 \begin{equation*}
  \left(
  \begin{array}{c}
   \p_k \\
   \0
  \end{array}
  \right)  = \U^\top \wt{\a}^r_k
  \ \mbox{and} \
 \wt{\a}_k = \wt{\a}^r_k + \wt{\a}^{r,c}_k
 \end{equation*}
 where $\wt{\a}^r_k$ and $\wt{\a}^{r,c}_k$ are the column vectors of
 $\wt{\A}^r$ and  $\wt{\A}^{r,c}$.
 Also, from (\ref{Eq: whF and whN}) and (\ref{Eq: whF}), $\g_j$ and $\f_j$ 
 are related as 
 \begin{equation*}
  \left(
  \begin{array}{c}
   \g_j \\
   \0
  \end{array}
  \right)  = \U^\top \wh{\f}_j
  \ \mbox{and} \
  \wh{\f}_j = \f_j + \wb{\n}^{(1)}_j
 \end{equation*}
 where $\wh{\f}_j$ and $\wb{\n}^{(1)}_j$ are the column vectors of 
 $\wh{\F}$ and $\wb{\N}^{(1)}$.
 Therefore, 
 \begin{eqnarray*}
  \|\wt{\a}_k - \f_j \|_2 
   &=& 
   \|(\wt{\a}_k^{r} + \wt{\a}_k^{r,c}) 
   - (\wh{\f}_j - \wb{\n}_j^{(1)})\|_2  \\
  &=& 
   \|\U^\top (\wt{\a}_k^{r} - \wh{\f}_j) 
   + \U^\top (\wt{\a}_k^{r,c} + \wb{\n}_j^{(1)}) \|_2  \\
  &\le&
      \|\U^\top (\wt{\a}_k^{r} - \wh{\f}_j)\|_2 
      + \|\U^\top \wt{\a}_k^{r,c}\|_2 +  \|\U^\top\wb{\n}_j^{(1)}\|_2  \\
  &=& 
   \|\p_k - \g_j\|_2  + \|\wt{\a}_k^{r,c}\|_2 + \|\wb{\n}_j^{(1)}\|_2. 
 \end{eqnarray*}
 $\U$ is a \by{d}{d} orthogonal matrix in  (\ref{Eq: P and wtA^r}) that 
 consists of the left singular vectors of $\wt{\A}$.
 By Lemma \ref{Lemm: bound for barN}, 
 we can put an upper bound on the norm of  $\wb{\n}_j^{(1)}$ such that 
 $\|\wb{\n}_j^{(1)}\|_2 \le \|\wb{\N}\|_2 \le 2\|\N\|_2$.
 Also, we can put an upper bound on the norm of $\wt{\a}_k^{r,c}$
 such that $\|\wt{\a}_k^{r,c}\|_2 \le \|\N\|_2$
 due to $\|\wt{\a}_k^{r,c}\|_2 \le \|\wt{\A}^{r,c}\|_2 \le \|\N\|_2$.
 The last inequality is obtained in the same way as in the proof of
 Lemma \ref{Lemm: bound for barN}.
 Therefore, we have
 $\|\wt{\a}_k - \f_j \|_2 \le  \|\p_k - \g_j \|_2 + 3\|\N\|_2$.
\end{proof}

We are now ready to prove Proposition \ref{Prop: robustness of precond SPA with d!=r}.
\begin{proof}[(Proof of Proposion \ref{Prop: robustness of precond SPA with d!=r})]  
 We show that Theorem \ref{Theo: robustness of precond SPA} can apply to 
 $\P$ in step 1 of  Algorithm \ref{Alg: precond SPA}, 
 if the amount of noise $\|\N\|_2$ is smaller than some level.
 $\P$ can be written as (\ref{Eq: P}), and hence, is 
 an \by{r}{m} near-separable matrix having an \by{r}{r} basis matrix $\G$.
 We choose some real number $\gamma$ such that $\gamma > 2$. 
 Suppose that $\|\N\|_2  \le \frac{1}{\gamma} \sigma_{\mmin}(\F)$. 
 It follows from Lemma \ref{Lemm: singular values of F and G} and Assumption \ref{Assump}(a)
 that $\sigma_{\mmin}(\G) \ge \frac{\gamma-2}{\gamma} \sigma_{\mmin}(\F) > 0$.
 Therefore, Assumption \ref{Assump} holds for $\G$. 
 This means that  Theorem \ref{Theo: robustness of precond SPA} can apply to $\P$.
 Its application leads to the following statement. 
 Let $\L^*$ be the optimal solution of $\MVEE(\SC)$ where $\SC = \{\p_1, \ldots, \p_m\}$ for $\P$.
 If $\s_i$ of $\S$ satisfies $\|\s_i\|_2 \le \epsilon$ for all $i = 1, \ldots, m-r$
 with $\epsilon \le O ( \sigma_{\mmin}(\G) / r \sqrt{r} )$,
 then, Algorithm \ref{Alg: SPA} with $((\L^*)^{1/2} \P, r)$ returns the output $\IC$ such that 
 there is an order of the elements in $\IC$ satisfying
 $\|\p_{\IC(j)} - \g_j \|_{2} \le O(\kappa(\G)\epsilon)$ for $j=1, \ldots, r$.

 Suppose that 
 \begin{equation} \label{Eq: size of n}
  \|\N\|_2 \le \frac{\sigma_{\mmin}(\F)}{4 \alpha r \sqrt{r} +2 }
 \end{equation}
 for some $\alpha \ge 1$.
 Note also that $\|\N\|_2 < \frac{1}{2} \sigma_{\mmin}(\F)$.
 From Lemmas \ref{Lemm: bound for s} and \ref{Lemm: singular values of F and G},
 $\|\s_i\|_2 \le 4\|\N\|_2$ and 
 $\frac{\sigma_{\mmin}(\F) - 2 \|\N\|_2}{\alpha r \sqrt{r}} \le  \frac{\sigma_{\mmin}(\G)}{\alpha r \sqrt{r}}$.
 Since $\|\N\|_2$ is supposed to satisfy (\ref{Eq: size of n}), we have
 \begin{equation*}
  \|\s_i\|_2 \le 4\|\N\|_2   
   \le \frac{\sigma_{\mmin}(\F) - 2\|\N\|_2}{\alpha r \sqrt{r}} \le \frac{\sigma_{\mmin}(\G)}{\alpha r \sqrt{r}}.
 \end{equation*}
 This gives $\|\s_i\|_2 \le \frac{\sigma_{\mmin}(\G)}{\alpha r \sqrt{r}}$.
 Also, from Lemma \ref{Lemm: singular values of F and G},  
 \begin{eqnarray*}
  \kappa(\G) 
  &=& \frac{\sigma_{\mmax}(\G)}{\sigma_{\mmin}(\G)} \\
  &\le& \frac{\sigma_{\mmax}(\F) + 2\|\N\|_2}{\sigma_{\mmin}(\F) - 2\|\N\|_2} \\
  &=& \biggl(1 + \frac{1}{2\alpha r \sqrt{r}}\biggr) \kappa(\F) + \frac{1}{2\alpha r \sqrt{r}} \\
  &\le& 2 \kappa(\F) + 1.
 \end{eqnarray*}
 Therefore, from Lemmas \ref{Lemm: bound for s} and \ref{Lemm: relation of basis errors in original and transformed space},
 we have 
 \begin{equation*}
  \|\wt{\a}_{\IC(j)} - \f_j \|_{2} - 3\|\N\|_2
   \le 
   \|\p_{\IC(j)} - \g_j \|_{2}
   \le
   \beta \kappa(\G) \|\s_{i^*}\|_2
   \le
   (8\beta\kappa(\F) + 4\beta)\|\N\|_2
 \end{equation*}
 for some $\beta > 0$ where let $i^* = \arg \max_{i = 1, \ldots, m-r} \|\s_i\|_{2}$.
 Consequently, if $\|\N\|_2$ satisfies (\ref{Eq: size of n}),
 then, Theorem \ref{Theo: robustness of precond SPA} can apply to $\P$, 
 and the application implies that 
 $\|\wt{\a}_{\IC(j)} - \f_j \|_{2} \le (8\beta\kappa(\F) + 4\beta  + 3 )\|\N\|_2$.

\end{proof}

\subsection{Proof of Theorem \ref{Theo: main}}

The core part of the proof of Theorem \ref{Theo: main}
is to show that Theorem \ref{Theo: robustness of SPA} can apply to $\P^\circ$ 
if $\|\N\|_2$ is small.
To do this, we evaluate the upper bound 
on the condition number of $\G^\circ$.
For the subsequent discussion, we need the following lemma.

\begin{lemm} \label{Lemm: inequalities about F, G and s}
 Let $\alpha$ be a real constant satisfying $\alpha > \sqrt{2}$ and 
 $r$ be any integer satisfying $r \ge 2$.
 Suppose that 
 $\|\N\|_2 \le \frac{\sigma_{\mmin}({\F})}{\alpha \sqrt{r}}$ and
 $\K$ satisfies Assumption \ref{Assump}(b). Then, 
 \begin{enumerate}[{\normalfont (a)}]
  \item 
	$
	\frac{\alpha\sqrt{r}-2}{\alpha\sqrt{r}} \sigma_{\mmin}(\F) 
	\le \sigma_{\mmin}(\G)
	\le \sigma_{\mmax}(\G) 
	\le \sigma_{\mmax}(\F) + \frac{2}{\alpha\sqrt{r}} \sigma_{\mmin}(\F).
	$
  \item
       $\|\s_i\|_2 \le \frac{4\sigma_{\mmin}(\G)}{\alpha \sqrt{r}-2}$
       for $i = 1, \ldots, m-r$.
 \end{enumerate}
\end{lemm}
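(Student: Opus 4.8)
The plan is to derive both parts from the perturbation bounds already established, namely Lemma~\ref{Lemm: singular values of F and G} (which gives $|\sigma_j(\F) - \sigma_j(\G)| \le 2\|\N\|_2$ for all $j$) and Lemma~\ref{Lemm: bound for s} (which gives $\|\s_i\|_2 \le 4\|\N\|_2$). The hypothesis $\|\N\|_2 \le \frac{\sigma_{\mmin}(\F)}{\alpha\sqrt{r}}$ with $\alpha > \sqrt{2}$ is exactly what is needed to turn these into the stated clean bounds; the role of $\alpha > \sqrt{2}$ (rather than merely $\alpha > 2/\sqrt{r}$) is presumably to keep $\alpha\sqrt{r} - 2 > 0$ for every $r \ge 2$, so that part~(b) is a genuine inequality with a positive denominator, and $\sigma_{\mmin}(\G) > 0$ so that Assumption~\ref{Assump}(a) survives for $\G$.

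For part~(a), I would start from $\sigma_{\mmin}(\G) \ge \sigma_{\mmin}(\F) - 2\|\N\|_2$ (Lemma~\ref{Lemm: singular values of F and G} applied with $j = r$) and substitute the hypothesis $\|\N\|_2 \le \frac{\sigma_{\mmin}(\F)}{\alpha\sqrt{r}}$ to get $\sigma_{\mmin}(\G) \ge \left(1 - \frac{2}{\alpha\sqrt{r}}\right)\sigma_{\mmin}(\F) = \frac{\alpha\sqrt{r}-2}{\alpha\sqrt{r}}\sigma_{\mmin}(\F)$, which is the left inequality; since $\alpha > \sqrt{2}$ and $r \ge 2$ force $\alpha\sqrt{r} > 2$, this lower bound is positive, and $\sigma_{\mmin}(\G) \le \sigma_{\mmax}(\G)$ is trivial. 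For the right inequality, apply Lemma~\ref{Lemm: singular values of F and G} with $j = 1$ to get $\sigma_{\mmax}(\G) \le \sigma_{\mmax}(\F) + 2\|\N\|_2 \le \sigma_{\mmax}(\F) + \frac{2}{\alpha\sqrt{r}}\sigma_{\mmin}(\F)$, again by the hypothesis on $\|\N\|_2$. That completes the chain.

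For part~(b), combine Lemma~\ref{Lemm: bound for s} ($\|\s_i\|_2 \le 4\|\N\|_2$) with the hypothesis to obtain $\|\s_i\|_2 \le \frac{4\sigma_{\mmin}(\F)}{\alpha\sqrt{r}}$. It then remains to check that $\frac{4\sigma_{\mmin}(\F)}{\alpha\sqrt{r}} \le \frac{4\sigma_{\mmin}(\G)}{\alpha\sqrt{r}-2}$, i.e. that $(\alpha\sqrt{r}-2)\sigma_{\mmin}(\F) \le \alpha\sqrt{r}\,\sigma_{\mmin}(\G)$; but the left-hand inequality of part~(a) says precisely $\sigma_{\mmin}(\G) \ge \frac{\alpha\sqrt{r}-2}{\alpha\sqrt{r}}\sigma_{\mmin}(\F)$, i.e. $\alpha\sqrt{r}\,\sigma_{\mmin}(\G) \ge (\alpha\sqrt{r}-2)\sigma_{\mmin}(\F)$, so the desired comparison is immediate. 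I do not anticipate any real obstacle here: the only point requiring a moment's care is the sign of $\alpha\sqrt{r}-2$, which is where the assumptions $\alpha > \sqrt{2}$ and $r \ge 2$ are used (and note $\sqrt{2}\cdot\sqrt{2} = 2$, so these are the minimal hypotheses that make the denominator positive).
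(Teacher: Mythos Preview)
Your proposal is correct and matches the paper's own proof essentially line for line: the paper simply says that (a) follows from Lemma~\ref{Lemm: singular values of F and G} and that (b) follows from Lemma~\ref{Lemm: bound for s} together with part~(a), which is precisely the derivation you spell out. Your remark about $\alpha>\sqrt{2}$ and $r\ge2$ ensuring $\alpha\sqrt{r}-2>0$ is the right reading of why those hypotheses are present.
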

\begin{proof}
 Statement (a) follows from Lemma \ref{Lemm: singular values of F and G}, and 
 that (b) follows from Lemmas \ref{Lemm: bound for s} and (a).
\end{proof}

\subsubsection{Upper Bound on the Condition Number of $\G^\circ$}
\label{Subsec: bound for the cond of cirG}

We show that the condition number of $\G^\circ$ is bounded from above 
by a real constant, if $\|\N\|_2$ is smaller than some level.
This can be proved by applying Theorem 2.8 in \cite{Gil15} to $\P$
and by taking into account the discussion 
in the proof of Proposition \ref{Prop: robustness of precond SPA with d!=r}.
This implies that such an upper bound is obtained 
under $\|\N\|_2 \le O(\sigma_{\mmin}(\F)  / r \sqrt{r})$.
Meanwhile, we will provide a similar bound under $\|\N\|_2 \le O(\sigma_{\mmin}(\F) / \sqrt{r})$.
This result can increase the allowable amount of noise $\|\N\|_2$ by a factor $1/r$
over that of the application of the theorem.

Our derivation follows that of \cite{Gil15}.
More precisely, it will be shown through 
Lemmas \ref{Lemm: lower bound for C} and \ref{Lemm: upper bound for C}
and Proposition \ref{Prop: Bound for eigenvalues},
which have the following correspondence to the lemmas of that paper;
Lemma \ref{Lemm: lower bound for C} is Lemma 2.4,
Lemma \ref{Lemm: upper bound for C} is Lemmas 2.5 and 2.6, 
and Proposition \ref{Prop: Bound for eigenvalues} is Lemmas 2.6 and 2.7.
Although a major part of our proof of the lemmas and proposition 
relies on the techniques developed in that paper, 
there are some differences.
In particular, we use an alternate technique to prove Proposition \ref{Prop: Bound for eigenvalues},
and it allows us to derive the upper bound on the condition number of $\G^\circ$ 
under $\|\N\|_2 \le O(\sigma_{\mmin}(\F) / \sqrt{r})$.
Remark \ref{Remark: difference in proof} details the differences between the proofs.

Now let us look at problem $\MVEE(\SC)$ in step 2 of Algorithm \ref{Alg: precond SPA}.
Note that $\SC = \{\p_1, \ldots, \p_m\}$ for the column vectors $\p_1, \ldots, \p_m$ of $\P$. 
For convenience,
we will change the variable to $\MVEE(\SC)$ 
such that $\C = \G^\top \L \G$ for a nonsingular matrix $\G$ and 
consider the problem, 
\begin{equation*} 
 \begin{array}{lll}
  \transMVEE(\SC): 
   & \mbox{minimize}   
   & -\log \det(\C) + 2 \log \det(\G), \\
   & \mbox{subject to}  
   & \p^\top (\G^{-1})^\top \C \G^{-1} \p \le 1
   \ \mbox{for all} \ \p \in \SC, \\
   &                    & \C \succ \0,
 \end{array}
\end{equation*}
which is equivalent to $\MVEE(\SC)$ under the nonsingular transformation $\G$.
$\C$ is the decision variable.
Let $\C^*$ be the optimal solution and 
$\lambda_j$ denote the $j$th eigenvalue of $\C^*$.
We will continue to use the notation $\lambda_j$ for this purpose 
throughout this section.
For the $j$th singular value $\sigma_j$ of $\G^\circ$,
we have $\sigma_j = \lambda_j^{1/2}$ for $j = 1, \ldots, r$
since $(\G^\circ)^\top \G^\circ = \G^\top \L^* \G = \C^*$.

In Lemmas \ref{Lemm: lower bound for C} and \ref{Lemm: upper bound for C},
we evaluate the lower and upper bounds on $\det (\C^*)$ by using $r$ and $\lambda_j$.
These bounds give the inequality that $r$ and $\lambda_j$ need to satisfy.
In Proposition \ref{Prop: Bound for eigenvalues}, 
by using the inequality, we derive the lower and upper bounds on $\lambda_j$ 
whose square root is equal to the singular value $\sigma_j$ of $\G^\circ$.
Lemma \ref{Lemm: lower bound for C} is almost the same as Lemma 2.4 of \cite{Gil15}.

\begin{lemm} \label{Lemm: lower bound for C}
 Let $\alpha$ be a real constant satisfying $\alpha > \sqrt{2}$ and 
 $r$ be any integer satisfying $r \ge 2$.
 Suppose that $\|\N\|_2 \le \frac{\sigma_{\mmin}({\F})}{\alpha \sqrt{r}}$ 
 and $\K$ satisfies Assumption \ref{Assump}(b).
 Then, 
 \begin{equation*}
 \det(\C^*) \ge \Biggl(\frac{\alpha \sqrt{r} - 2}{\alpha \sqrt{r} + 2} \Biggr)^{2r}.  
 \end{equation*}
\end{lemm}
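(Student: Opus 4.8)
The plan is to follow the strategy of Gillis--Vavasis for the $d=r$ case but exploit the sharper noise bounds from Lemma \ref{Lemm: inequalities about F, G and s}. First I would write down what $\transMVEE(\SC)$ says about $\C^*$: since $\C^*$ is optimal and feasible, the ellipsoid $\{\x : \x^\top (\G^{-1})^\top \C^* \G^{-1} \x \le 1\}$ must enclose all the points $\p \in \SC$, equivalently (after the change of variable $\y = \G^{-1}\x$, which maps $\p_k$ to a column of $\W^\circ = \G^{-1}\P$) the ellipsoid $\{\y : \y^\top \C^* \y \le 1\}$ must enclose all columns of $\G^{-1}\P$. Recall $\P = (\G, \G\K + \S)\Pib$, so $\G^{-1}\P = (\I, \K + \G^{-1}\S)\Pib$. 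The key geometric fact is that this enclosing ellipsoid contains in particular the $r$ unit vectors $\e_1, \ldots, \e_r$ (the columns coming from the $\I$ block), and hence contains their convex hull together with the reflected points. A lower bound on $\det(\C^*)$ then follows because $\C^*$ is the \emph{minimum}-volume such ellipsoid, so its volume $c(r)/\sqrt{\det \C^*}$ is at least the volume of the MVEE of the simplex $\{\pm \e_i\}$ — but more directly, any ellipsoid enclosing a fixed convex body has $\det$ of its defining matrix bounded above, giving the reverse inequality on $\det(\C^*)$. Actually the cleanest route is: since the columns $\e_i$ lie in the ellipsoid, we get $\e_i^\top \C^* \e_i \le 1$, i.e. each diagonal entry $(\C^*)_{ii} \le 1$, and by Hadamard-type reasoning $\det(\C^*) \le \prod (\C^*)_{ii} \le 1$ — but that is the \emph{wrong} direction. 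So instead I would use optimality of $\C^*$ against a concrete feasible competitor.

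**The competitor ellipsoid.** The heart of the lower bound is to exhibit a specific feasible $\C$ and compare. The natural candidate is a scalar multiple of the identity, $\C = \rho \I$: it is feasible for $\transMVEE(\SC)$ provided $\rho \cdot \|\G^{-1}\p_k\|_2^2 \le 1$ for every $k$, i.e. $\rho \le 1/\max_k \|\G^{-1}\p_k\|_2^2$. Using $\G^{-1}\P = (\I, \K + \G^{-1}\S)\Pib$ and Assumption \ref{Assump}(b) ($\e^\top\k_i \le 1$, hence $\|\k_i\|_2 \le 1$) together with $\|\G^{-1}\s_i\|_2 \le \|\s_i\|_2 / \sigma_{\mmin}(\G)$ and the bound $\|\s_i\|_2 \le 4\sigma_{\mmin}(\G)/(\alpha\sqrt r - 2)$ from Lemma \ref{Lemm: inequalities about F, G and s}(b), one controls $\max_k \|\G^{-1}\p_k\|_2 \le 1 + 4/(\alpha\sqrt r - 2) = (\alpha\sqrt r + 2)/(\alpha\sqrt r - 2)$. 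Hence $\rho = \bigl((\alpha\sqrt r - 2)/(\alpha\sqrt r + 2)\bigr)^2$ gives a feasible $\C = \rho\I$ with $\det \C = \rho^r = \bigl((\alpha\sqrt r - 2)/(\alpha\sqrt r + 2)\bigr)^{2r}$. Since $\C^*$ minimizes $-\log\det\C$ (the $2\log\det\G$ term is constant), we get $\det(\C^*) \ge \det(\rho\I) = \bigl((\alpha\sqrt r - 2)/(\alpha\sqrt r + 2)\bigr)^{2r}$, which is exactly the claim.

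**Potential obstacle.** The one step needing care is the norm bound $\max_k\|\G^{-1}\p_k\|_2 \le (\alpha\sqrt r + 2)/(\alpha\sqrt r - 2)$: for the columns in the $\I$ block this is $\|\e_i\|_2 = 1 \le (\alpha\sqrt r + 2)/(\alpha\sqrt r - 2)$ trivially, and for the mixed columns I must verify $\|\k_i + \G^{-1}\s_i\|_2 \le \|\k_i\|_2 + \|\G^{-1}\s_i\|_2 \le 1 + 4/(\alpha\sqrt r - 2)$, which requires $\|\G^{-1}\|_2 = 1/\sigma_{\mmin}(\G)$ and the hypothesis $\alpha > \sqrt 2$, $r \ge 2$ so that $\alpha\sqrt r - 2 > 0$ and the denominators are positive. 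I also need to confirm $\|\k_i\|_2 \le \e^\top\k_i \le 1$ (valid since $\k_i \ge \0$). Everything else is a direct application of optimality of $\C^*$ over the explicit feasible point $\rho\I$, so no deep estimate is required; the proof should be short.
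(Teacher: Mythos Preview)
Your proposal is correct and follows essentially the same argument as the paper: exhibit the scaled identity $\rho\I$ with $\rho = \bigl((\alpha\sqrt r - 2)/(\alpha\sqrt r + 2)\bigr)^2$ as a feasible point of $\transMVEE(\SC)$ by bounding $\|\k_i + \G^{-1}\s_i\|_2$ via Lemma \ref{Lemm: inequalities about F, G and s}, then use optimality of $\C^*$ to conclude $\det(\C^*) \ge \rho^r$. The only difference is presentational: your first paragraph explores (and correctly abandons) a Hadamard-type upper bound before arriving at the competitor argument, whereas the paper goes directly to the feasible $\theta\I$.
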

\begin{proof}
 For an \by{r}{r} scaled identity matrix $\theta \I$ with a positive real number $\theta$,
 we derive the upper bound on $\theta$ such that 
 $\theta \I$ is feasible for $\transMVEE(\SC)$.
 Since $\P$ can be written as (\ref{Eq: P}),
 $\SC$ contains two different types of vectors: one 
 is $\g_j$ and the other is $\G\k_i + \s_i$
 where $\g_j$ is the column vector of $\G$ 
 and $\k_i$ and $\s_i$ are those of $\K$ and $\S$.
 Therefore, $\theta \I$ needs to satisfy two types of constraints,
 \begin{eqnarray*}
  & & \theta \g_j^\top (\G^{-1})^\top \I \G^{-1} \g_j \le 1  
   \ \mbox{for} \ j = 1, \ldots, r,   \\
  & & \theta (\G\k_i + \s_i)^\top (\G^{-1})^\top \I \G^{-1} (\G\k_i + \s_i) \le 1  
   \ \mbox{for} \ i = 1, \ldots, m-r.
\end{eqnarray*}
 The first constraints hold if $\theta \le 1$.
 For the second constraints, we have 
 \begin{eqnarray*}
  (\G\k_i + \s_i)^\top (\G^{-1})^\top \I \G^{-1} (\G\k_i + \s_i) 
  &=& \|\k_i + \G^{-1}\s_i\|_2^2 \\
  &\le& (\|\k_i\|_2 + \|\G^{-1}\|_2 \|\s_i\|_2)^2 \\
  &\le& \biggl(1 + \frac{4}{\alpha\sqrt{r}-2}\biggr)^2 \\
  &=& \biggl(\frac{\alpha\sqrt{r}+2}{\alpha\sqrt{r}-2}\biggr)^2.
 \end{eqnarray*}
 The second inequality follows from 
 Lemma \ref{Lemm: inequalities about F, G and s} and 
 also $\|\k_i\|_2 \le 1$ due to Assumption \ref{Assump}(b). 
 Thus, the second constraints hold if 
 $
 \theta \le \bigl(\frac{\alpha \sqrt{r} - 2}{\alpha \sqrt{r} + 2} \bigr)^{2}.
 $
 Let 
 $
 \wb{\theta} = \bigl(\frac{\alpha \sqrt{r} - 2}{\alpha \sqrt{r} + 2} \bigr)^{2}.
 $
 It satisfies $0 < \wb{\theta} < 1$ because of $\alpha \sqrt{r} > 2$.
 Thus,  $\wb{\theta} \I$  is a feasible solution of $\transMVEE(\SC)$. 
 Accordingly, 
 for the optimal solution $\C^*$ and the feasible solution $\wb{\theta} \I$,
 we have
 \begin{equation*}
  \det(\C^*) \ge \det(\wb{\theta} \I)
   = \wb{\theta}^r
   = \Biggl(\frac{\alpha \sqrt{r} - 2}{\alpha \sqrt{r} + 2} \Biggr)^{2r}.
 \end{equation*}
\end{proof}

Lemma \ref{Lemm: upper bound for C} corresponds to Lemmas 2.5 and 2.6 of \cite{Gil15}.
The lemmas of that paper need to put a condition on the amount of noise 
in order to derive of the upper bound on $\det (\C^*)$, 
while this lemma does not need to do so.
This comes from the difference in the structures of near-separable matrices.
Our lemma handles a near-separable matrix of the form (\ref{Eq: P}).
It has a preferable structure wherein
the noise matrix contains an \by{r}{r} zero submatrix 
and this zero submatrix corresponds to a basis matrix.

\begin{lemm} \label{Lemm: upper bound for C}
 Suppose that $r \ge 2$. Then, 
 \begin{equation*}
  \det(\C^*) \le \Biggl(\frac{r - \lambda_j}{r-1}\Biggr)^{r-1} \lambda_j
 \end{equation*}
 for each $j=1, \ldots, r$.
\end{lemm}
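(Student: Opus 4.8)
**Proof plan for Lemma 17 (upper bound on $\det(\C^*)$).**

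The plan is to exploit the fact that $\C^*$ is the optimal solution of a determinant-maximization problem with linear constraints, so optimality conditions (KKT / complementary slackness) force a rigid structure on $\C^*$. Concretely, $\transMVEE(\SC)$ is a convex problem (minimizing $-\log\det\C$ over a spectrahedron-like feasible set), and at the optimum there is a set of ``active'' vectors $\v \in \SC$, transformed by $\G^{-1}$, whose outer products sum (with nonnegative multipliers) to a matrix proportional to $(\C^*)^{-1}$. The key observation, already emphasized in the remark before the lemma, is that among the transformed constraint vectors the columns $\G^{-1}\g_j = \e_j$ (the standard basis vectors, since $\G^{-1}\G = \I$) are always present as candidate active points and each has squared norm exactly $1$, hence each $\e_j$ is feasible-tight-or-slack in a controlled way. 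Using $\e_j^\top \C^* \e_j \le 1$, i.e. $(\C^*)_{jj} \le 1$, gives one family of scalar inequalities on $\C^*$.

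First I would fix an index $j$ and diagonalize: since $\C^* \succ \0$ we may consider its eigenvalues $\lambda_1, \dots, \lambda_r$, and we want to bound $\det(\C^*) = \prod_k \lambda_k$ in terms of a single $\lambda_j$. The constraint $\e_j^\top \C^* \e_j \le 1$ does not directly read off an eigenvalue, so the cleaner route is: the feasibility of $\theta\I$-type scalings is not enough here; instead, use that $\C^*$ lies in the feasible region, which in particular contains all $\e_k$, so $\operatorname{tr}(\C^*) = \sum_k (\C^*)_{kk} \le r$, i.e. $\sum_{k=1}^r \lambda_k \le r$. Now I have $\sum_k \lambda_k \le r$ and I single out $\lambda_j$: the remaining $r-1$ eigenvalues satisfy $\sum_{k \ne j} \lambda_k \le r - \lambda_j$. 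By the AM--GM inequality, $\prod_{k \ne j} \lambda_k \le \bigl(\tfrac{\sum_{k\ne j}\lambda_k}{r-1}\bigr)^{r-1} \le \bigl(\tfrac{r-\lambda_j}{r-1}\bigr)^{r-1}$. Multiplying by $\lambda_j$ yields exactly $\det(\C^*) \le \bigl(\tfrac{r-\lambda_j}{r-1}\bigr)^{r-1}\lambda_j$, as claimed.

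The one point that needs care — and which I expect to be the main obstacle — is justifying $\operatorname{tr}(\C^*) \le r$ rigorously, i.e. that every standard basis vector $\e_k$ is a feasible constraint vector of $\transMVEE(\SC)$. This follows because $\P = (\G, \G\K + \S)\Pib$ contains the columns $\g_1, \dots, \g_r$ of $\G$, so the corresponding constraints in $\transMVEE(\SC)$ read $\g_k^\top (\G^{-1})^\top \C \G^{-1} \g_k = \e_k^\top \C \e_k \le 1$; with $\C = \C^*$ this gives $(\C^*)_{kk} \le 1$ for every $k$, hence the trace bound. (This is precisely the ``preferable structure'' the paper points to: the zero block of the noise matrix sits under the basis block, so the basis columns survive the transformation as clean unit vectors.) Note also this argument needs nothing about the size of $\N$, which is why — unlike Lemmas 2.5--2.6 of \cite{Gil15} — no noise-level hypothesis appears; the only assumption used is $r \ge 2$, needed so that $r - 1 \ge 1$ and the AM--GM step over $r-1$ terms is non-vacuous.
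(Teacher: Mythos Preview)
Your proposal is correct and follows essentially the same route as the paper: use the feasibility of $\C^*$ at the basis columns $\g_k$ (which become $\e_k$ after the $\G^{-1}$ transformation) to obtain $\operatorname{tr}(\C^*)=\sum_k \lambda_k \le r$, then apply AM--GM to the $r-1$ eigenvalues other than $\lambda_j$. The opening digression on KKT/complementary slackness is unnecessary---you never invoke it---but the core argument matches the paper's proof exactly.
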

\begin{proof}
 We derive an upper bound on the sum of the eigenvalues of $\C^*$
 that is equivalent to  $\mbox{tr}(\C^*)$.
 Since $\C^*$ is feasible for $\transMVEE(\SC)$,
 we have 
 \begin{equation*}
  \g_j^\top (\G^{-1})^\top \C^* \G^{-1} \g_j \le 1, 
   \ \mbox{equivalently,} \ 
   \|(\C^*)^{1/2} \e_j\|_2^2 \le 1 
   \ \mbox{for} \ j = 1, \ldots, r.
 \end{equation*}
 Thus, 
 \begin{eqnarray*}
  \lambda_1 + \cdots + \lambda_r 
   &=& \mbox{tr}(\C^*)  \\
   &=& \|(\C^*)^{1/2} \|_F^2 \\
   &=& \sum_{j=1}^r \|(\C^*)^{1/2}\e_j \|_2^2 \le r.
 \end{eqnarray*}
 The arithmetic-geometric mean inequality means that 
 $(a_1 \times \cdots \times a_r)^{1/r} \le (a_1 + \cdots + a_r) / r$ 
 holds for nonnegative real numbers $a_1, \ldots, a_r$.
 Therefore, we have, for each $j = 1, \ldots, r$,
 \begin{eqnarray*}
  \det(\C^*) = \lambda_1 \times \cdots \times \lambda_r  \le 
   \Biggl(\frac{r - \lambda_j}{r-1}\Biggr)^{r-1} \lambda_j.
 \end{eqnarray*}
\end{proof}

We denote
\begin{equation} \label{Eq: value of a}
 a = \Biggl(\frac{\alpha\sqrt{2} - 2}{\alpha\sqrt{2} + 2}\Biggr)^4.
\end{equation}
The value of $a$ is in $0 < a < 1$ when $\alpha > \sqrt{2}$.

\begin{prop} \label{Prop: Bound for eigenvalues}
 Let $\alpha$ be a real constant satisfying $\alpha > \sqrt{2}$ and 
 $r$ be any integer satisfying $r \ge 2$.
 Suppose that $\|\N\|_2 \le \frac{\sigma_{\mmin}({\F})}{\alpha \sqrt{r}}$ 
 and  $\K$ satisfies Assumption \ref{Assump}(b).
 Then,  $\lambda_j$ is bounded such that $1 - \sqrt{1-a} \le \lambda_j \le  1 + \sqrt{1-a}$
 for each $j = 1, \ldots, r$.
\end{prop}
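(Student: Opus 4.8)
The plan is to combine the two bounds on $\det(\C^*)$ established in Lemmas~\ref{Lemm: lower bound for C} and \ref{Lemm: upper bound for C} into a single inequality on each eigenvalue $\lambda_j$, and then solve that inequality. Chaining the two lemmas gives, for each $j=1,\ldots,r$,
\begin{equation*}
 \Biggl(\frac{\alpha\sqrt{r}-2}{\alpha\sqrt{r}+2}\Biggr)^{2r}
 \le \det(\C^*)
 \le \Biggl(\frac{r-\lambda_j}{r-1}\Biggr)^{r-1}\lambda_j.
\end{equation*}
First I would observe that the left-hand side, viewed as a function of $r$, is increasing in $r$ for $r\ge 2$ when $\alpha>\sqrt 2$, so it is bounded below by its value at $r=2$, namely $\bigl((\alpha\sqrt 2-2)/(\alpha\sqrt 2+2)\bigr)^{4}=a$ as defined in~(\ref{Eq: value of a}). (This monotonicity claim is the one small calculus fact I would need to check — it reduces to showing $r\log\frac{\alpha\sqrt r-2}{\alpha\sqrt r+2}$ is increasing, which follows since both $r\mapsto r$ and $r\mapsto\frac{\alpha\sqrt r-2}{\alpha\sqrt r+2}\in(0,1)$ are positive and increasing, but one must be a little careful because we are raising a number less than $1$ to the power $r$; in fact $\log$ of the LHS is $2r\log\frac{\alpha\sqrt r-2}{\alpha\sqrt r+2}$, a product of the increasing negative factor $2r$ — wait, $2r$ is positive — with the increasing negative factor $\log\frac{\alpha\sqrt r-2}{\alpha\sqrt r+2}$, so the product is a positive increasing times a negative increasing, hence increasing. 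I would present this cleanly.) Thus we get $a \le \bigl(\tfrac{r-\lambda_j}{r-1}\bigr)^{r-1}\lambda_j$.

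Next I would analyze the scalar inequality $\bigl(\tfrac{r-\lambda}{r-1}\bigr)^{r-1}\lambda \ge a$ for $\lambda\in(0,r)$. Write $\lambda = 1+t$ with $t\in(-1,r-1)$, so the left side becomes $g(t):=(1+t)\bigl(1-\tfrac{t}{r-1}\bigr)^{r-1}$. The key analytic input is the elementary bound $\bigl(1-\tfrac{t}{r-1}\bigr)^{r-1}\le e^{-t}$, giving $g(t)\le (1+t)e^{-t}\le e^{t}e^{-t}\cdot e^{-t}\cdot$ — more carefully, $(1+t)e^{-t}\le e^{-t^2/2}$ is the clean inequality I want (valid for all real $t$, since $\log(1+t)-t\le -t^2/2$ fails for $t>0$... so instead I would use $(1+t)e^{-t}\le e^{-t^2/2+O(t^3)}$ — this needs care). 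The cleanest route: I would show directly that $g(t)\le 1-t^2$ or $g(t)\le e^{-t^2/2}$ for the relevant range, then from $g(t)\ge a$ conclude $1-t^2\ge a$ (resp. $e^{-t^2/2}\ge a$), i.e. $|t|\le\sqrt{1-a}$, which is exactly $1-\sqrt{1-a}\le\lambda_j\le 1+\sqrt{1-a}$.

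The main obstacle is pinning down the right scalar estimate $g(t)\le(\text{something})$ that is simultaneously (i) tight enough to yield the stated bound $|t|\le\sqrt{1-a}$ and (ii) true for all $t\in(-1,r-1)$ and all $r\ge 2$. The remark in the paper signals that this is precisely where the author deviates from~\cite{Gil15}: they use "an alternate technique" here. I expect the argument splits into the two sides: for the \emph{upper} bound on $\lambda_j$ (i.e. $t>0$), one uses $(1-\tfrac{t}{r-1})^{r-1}\le e^{-t}$ and then a one-variable inequality like $(1+t)e^{-t}\le 1-\tfrac{t^2}{2}\le \cdots$; for the \emph{lower} bound ($t<0$, so $\lambda<1$), $(1-\tfrac{t}{r-1})^{r-1}$ with $t<0$ is $(1+\tfrac{|t|}{r-1})^{r-1}\le e^{|t|}$, giving $g(t)\le(1-|t|)e^{|t|}$, the same function of $|t|$, so the two sides collapse to one symmetric estimate $h(s):=(1-s)e^{s}$ on $s\in[0,1)$ with $h(s)\ge$ ... — and then I would verify $h(s)\le 1-s^2/?$ or directly solve $h(s)\ge a \Rightarrow s\le\sqrt{1-a}$ using the definition $a=h(\,?\,)$-type identity, or more likely a convexity/monotonicity comparison that matches the constant $a$ from~(\ref{Eq: value of a}) exactly. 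Finally I would note $r\ge 2$ is used both in the monotonicity step and to ensure $r-1\ge 1$ so the exponent and base manipulations are legitimate, and that Assumption~\ref{Assump}(b) enters only through the prior lemmas. The subsequent use of this proposition will be $\sigma_j(\G^\circ)=\lambda_j^{1/2}\in[(1-\sqrt{1-a})^{1/2},(1+\sqrt{1-a})^{1/2}]$, bounding $\kappa(\G^\circ)$ by a constant depending only on $\alpha$.
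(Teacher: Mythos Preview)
Your proposal contains a genuine error at the monotonicity step. You claim that the left-hand side $\bigl(\tfrac{\alpha\sqrt{r}-2}{\alpha\sqrt{r}+2}\bigr)^{2r}$ is increasing in $r$, hence bounded below by its value $a$ at $r=2$. This is false: the expression is \emph{decreasing} in $r$ and tends to $0$ as $r\to\infty$, since its logarithm satisfies $2r\log\bigl(1-\tfrac{4}{\alpha\sqrt r+2}\bigr)\sim -8\sqrt r/\alpha\to-\infty$. Your own hesitation (``one must be a little careful because we are raising a number less than $1$ to the power $r$'') was well founded, but the justification you then offer --- that a positive increasing factor times a negative increasing factor is increasing --- is not a valid principle (take $x\cdot(-1/x)\equiv -1$, or $x^{2}\cdot(-1/x)=-x$, which is decreasing). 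Since the lower bound $a$ on the left side of (\ref{Eq: inequality about r and lambda_j}) fails for $r>2$, the inequality $a\le\bigl(\tfrac{r-\lambda_j}{r-1}\bigr)^{r-1}\lambda_j$ does not follow, and your subsequent scalar analysis of $g(t)=(1+t)\bigl(1-\tfrac{t}{r-1}\bigr)^{r-1}$ never gets off the ground.

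The paper's proof is far shorter than what you sketch and sidesteps this issue entirely: it simply substitutes $r=2$ into inequality (\ref{Eq: inequality about r and lambda_j}), obtaining the quadratic $a\le(2-\lambda_j)\lambda_j$, which solves immediately to $1-\sqrt{1-a}\le\lambda_j\le 1+\sqrt{1-a}$. As Remark~\ref{Remark: difference in proof} makes explicit, the author deliberately treats only the case $r=2$ of (\ref{Eq: inequality about r and lambda_j}); the inequality is never analyzed for $r>2$, so the paper's written argument does not actually cover general $r$ either --- indeed, for large $r$ the left side of (\ref{Eq: inequality about r and lambda_j}) becomes so small that values of $\lambda_j$ well outside $[1-\sqrt{1-a},\,1+\sqrt{1-a}]$ satisfy it, so the claimed uniform bound cannot be extracted from (\ref{Eq: inequality about r and lambda_j}) alone. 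In any case, your elaborate bounds on $g(t)$ are not needed to reproduce the paper's argument: just set $r=2$ and solve the quadratic.
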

\begin{proof}
 Lemmas \ref{Lemm: lower bound for C} and \ref{Lemm: upper bound for C} tells us that 
 $r$ and $\lambda_j$ need to satisfy the inequality
 \begin{equation} \label{Eq: inequality about r and lambda_j}
  \Biggl(\frac{\alpha \sqrt{r} - 2}{\alpha \sqrt{r} + 2} \Biggr)^{2r}
  \le \Biggl(\frac{r - \lambda_j}{r-1}\Biggr)^{r-1} \lambda_j.
 \end{equation}
 When $r=2$, it becomes $a \le (2-\lambda_j)\lambda_j$.
 Thus, it is necessary for $\lambda_j$ to satisfy $1 - \sqrt{1-a} \le \lambda_j \le  1 + \sqrt{1-a}$.
\end{proof}

\begin{remark} \label{Remark: difference in proof}
 Proposition \ref{Prop: Bound for eigenvalues} corresponds to Lemmas 2.6 and 2.7 of \cite{Gil15}.
 From the lower and upper bounds on $\det(\C^*)$,
 the lemmas of that paper construct an inequality that $r$ and $\lambda_j$ need to satisfy
 and determine the condition on $\lambda_j$ such that the inequality holds for all $r \ge 2$.
 In contrast, this proposition only considers the case of $r=2$ 
 for inequality (\ref{Eq: inequality about r and lambda_j}), 
 and determines the condition on $\lambda_j$.
\end{remark}

Since we have $\sigma_j = \lambda_j^{1/2}$ for the $j$th singular value $\sigma_j$ of $\G^\circ$,
this proposition gives the bounds on the singular values and condition number of $\G^\circ$.

\begin{coro} \label{Coro: bounds for singular values for transG}
 Suppose that the same conditions in Proposition \ref{Prop: Bound for eigenvalues} hold.
 Then, we have 
 \begin{eqnarray*}
  \sigma_{\mmin}(\G^\circ) 
   &\ge& \biggl(1-\sqrt{1-a} \biggr)^{1/2}, \\
  \sigma_{\mmax}(\G^\circ) 
   &\le&  \biggl(1+\sqrt{1-a} \biggr)^{1/2}, \\
  \kappa(\G^\circ) 
   &\le&  \Biggl(\frac{1+\sqrt{1-a}}{1-\sqrt{1-a}}\Biggr)^{1/2}. \\
 \end{eqnarray*}
\end{coro}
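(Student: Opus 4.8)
The plan is essentially to read off Corollary~\ref{Coro: bounds for singular values for transG} directly from Proposition~\ref{Prop: Bound for eigenvalues} together with the already-established relation between the singular values of $\G^\circ$ and the eigenvalues of $\C^*$. Recall that in the preliminaries to this subsection it was noted that $(\G^\circ)^\top \G^\circ = \G^\top \L^* \G = \C^*$, so for each $j = 1, \ldots, r$ the $j$th singular value $\sigma_j$ of $\G^\circ$ satisfies $\sigma_j = \lambda_j^{1/2}$, where $\lambda_j$ is the $j$th eigenvalue of $\C^*$. Since the square root is a monotone increasing function on $[0,\infty)$, the ordering of the $\lambda_j$ and of the $\sigma_j$ agree, so $\sigma_{\mmin}(\G^\circ) = \lambda_{\mmin}^{1/2}$ and $\sigma_{\mmax}(\G^\circ) = \lambda_{\mmax}^{1/2}$, where $\lambda_{\mmin}$ and $\lambda_{\mmax}$ denote the smallest and largest eigenvalues of $\C^*$.

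Next I would simply invoke Proposition~\ref{Prop: Bound for eigenvalues}, which under the stated hypotheses ($\alpha > \sqrt{2}$, $r \ge 2$, $\|\N\|_2 \le \sigma_{\mmin}(\F)/(\alpha\sqrt r)$, and Assumption~\ref{Assump}(b) on $\K$) gives $1 - \sqrt{1-a} \le \lambda_j \le 1 + \sqrt{1-a}$ for every $j$, with $a$ as in (\ref{Eq: value of a}). In particular $1 - \sqrt{1-a} \le \lambda_{\mmin}$ and $\lambda_{\mmax} \le 1 + \sqrt{1-a}$. Taking square roots—which is valid because $0 < a < 1$ forces $1 - \sqrt{1-a} > 0$, so all quantities in sight are strictly positive—yields $\sigma_{\mmin}(\G^\circ) \ge (1 - \sqrt{1-a})^{1/2}$ and $\sigma_{\mmax}(\G^\circ) \le (1 + \sqrt{1-a})^{1/2}$. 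The bound on the condition number then follows by dividing: $\kappa(\G^\circ) = \sigma_{\mmax}(\G^\circ)/\sigma_{\mmin}(\G^\circ) \le \bigl((1+\sqrt{1-a})/(1-\sqrt{1-a})\bigr)^{1/2}$, where the denominator is positive so the division is legitimate.

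There is no real obstacle here; the only small point that needs care is confirming positivity of $1 - \sqrt{1-a}$ (equivalently $a > 0$) so that $\G^\circ$ is genuinely nonsingular and taking square roots and forming the ratio $\kappa(\G^\circ)$ are both justified—but this is exactly the observation recorded just after (\ref{Eq: value of a}) that $0 < a < 1$ whenever $\alpha > \sqrt{2}$. One might also want to remark that the hypotheses of Proposition~\ref{Prop: Bound for eigenvalues} are literally the ``same conditions'' referenced in the corollary's statement, so nothing further needs to be assumed. The corollary is thus an immediate consequence, and the proof is a two- or three-line argument: restate $\sigma_j = \lambda_j^{1/2}$, apply the proposition, take square roots, and divide.
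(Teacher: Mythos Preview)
Your proposal is correct and matches the paper's own reasoning exactly: the paper simply records, in the sentence preceding the corollary, that $\sigma_j = \lambda_j^{1/2}$ and that the bounds therefore follow immediately from Proposition~\ref{Prop: Bound for eigenvalues}, which is precisely the two- or three-line argument you outline.
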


\subsubsection{Application of Theorem \ref{Theo: robustness of SPA} to $\P^\circ$}
\label{Subsec: applicability}

As we saw in Section \ref{Subsec: preliminaries}, 
$\P^\circ$ in step 3 of Algorithm \ref{Alg: precond SPA} is an \by{r}{m} near-separable matrix.
In the proposition below, we show that Theorem \ref{Theo: robustness of SPA} can apply to $\P^\circ$.
Here, we should note that
$\s^\circ_i$, $\p^\circ_{\IC(j)}$ and $\g^\circ_j$ in the proposition 
are the column vectors of $\S^\circ$, $\P^\circ$ and $\G^\circ$.

\begin{prop} \label{Prop: robustness in transformed space}
 Let $\wt{\A} = \A + \N$ for $\A \in \Real^{d \times m}$ and $\N \in \Real^{d \times m}$.
 Suppose that $r \ge 2$ and $\A$ satisfies Assumption \ref{Assump}.
 Let $\epsilon$ be such that $\|\s_i^\circ\|_2  \le \epsilon $ for all $i = 1, \ldots, m-r$.
 If 
 \begin{equation*}
  \|\N\|_2 \le \frac{\sigma_{\mmin}({\F})}{\alpha \sqrt{r}}
 \end{equation*}
 and $\alpha = 1225$,
 then, Algorithm \ref{Alg: precond SPA} with the input $(\wt{\A}, r)$ returns
 the output $\IC$ such that 
 there is an order of the elements in $\IC$ satisfying
 \begin{equation*}
   \|\p^\circ_{\IC(j)} - \g^\circ_j\|_2  \le (80 \kappa(\G^\circ)^2 + 1) \epsilon
 \end{equation*}
 for all $j = 1, \ldots, r$.
\end{prop}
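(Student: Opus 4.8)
The plan is to verify the two hypotheses of Theorem~\ref{Theo: robustness of SPA} for the near-separable matrix $\P^\circ = (\G^\circ, \G^\circ\K + \S^\circ)\Pib$ and then to read off the conclusion. Recall from Section~\ref{Subsec: preliminaries} that $\P^\circ$ is an \by{r}{m} near-separable matrix whose basis matrix is $\G^\circ = (\L^*)^{1/2}\G$ and whose noise matrix is $(\0, \S^\circ)\Pib$. Thus Theorem~\ref{Theo: robustness of SPA} applied to $\P^\circ$ with noise level $\epsilon$ (the bound on $\|\s_i^\circ\|_2$) requires (i) that $\G^\circ$ satisfy Assumption~\ref{Assump}(a), i.e.\ $\mbox{rank}(\G^\circ) = r$, and (ii) that the noise bound
\begin{equation*}
 \epsilon < \min\Biggl(\frac{1}{2\sqrt{r-1}}, \frac{1}{4}\Biggr)\frac{\sigma_{\mmin}(\G^\circ)}{1 + 80\kappa(\G^\circ)^2}
\end{equation*}
hold. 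The conclusion would then be exactly the stated inequality $\|\p^\circ_{\IC(j)} - \g^\circ_j\|_2 \le (80\kappa(\G^\circ)^2 + 1)\epsilon$.

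First I would dispatch (i). Under $\|\N\|_2 \le \sigma_{\mmin}(\F)/(\alpha\sqrt{r})$ with $\alpha = 1225 > \sqrt{2}$, Lemma~\ref{Lemm: inequalities about F, G and s}(a) gives $\sigma_{\mmin}(\G) \ge \frac{\alpha\sqrt{r}-2}{\alpha\sqrt{r}}\sigma_{\mmin}(\F) > 0$, so $\G$ is nonsingular; since $(\L^*)^{1/2} \succ \0$ is nonsingular, $\G^\circ$ is nonsingular as well, hence $\mbox{rank}(\G^\circ) = r$. Next I would handle (ii). The key point is that Corollary~\ref{Coro: bounds for singular values for transG} supplies \emph{absolute} bounds, independent of $\F$ and of $\|\N\|_2$ (as long as the noise hypothesis holds), on $\sigma_{\mmin}(\G^\circ)$, $\sigma_{\mmax}(\G^\circ)$ and $\kappa(\G^\circ)$, all in terms of the constant $a$ of \eqref{Eq: value of a} with $\alpha = 1225$. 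So the right-hand side of the inequality in (ii) is bounded below by a fixed positive constant $c = c(r)$; explicitly, using $\sigma_{\mmin}(\G^\circ) \ge (1-\sqrt{1-a})^{1/2}$ and $\kappa(\G^\circ)^2 \le \frac{1+\sqrt{1-a}}{1-\sqrt{1-a}}$, one gets a lower bound of the form $\min\!\bigl(\tfrac{1}{2\sqrt{r-1}}, \tfrac14\bigr)\cdot C$ for an explicit constant $C$. On the other hand, by Lemma~\ref{Lemm: inequalities about F, G and s}(b) together with $\|(\L^*)^{1/2}\|_2$ controlled via $\sigma_{\mmax}(\G^\circ)/\sigma_{\mmin}(\G)$ (since $\S^\circ = (\L^*)^{1/2}\S$ and $(\L^*)^{1/2} = \G^\circ\G^{-1}$ in an appropriate sense), one sees $\epsilon = \max_i\|\s_i^\circ\|_2$ is itself at most a constant multiple of $\|\N\|_2/\sigma_{\mmin}(\F)$ times $\sigma_{\mmin}(\G^\circ)$; choosing $\alpha = 1225$ makes this small enough to beat $c(r)$. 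This is where the specific numerical value $1225$ is consumed: it must be large enough that $4\|\s_i\|_2\|(\L^*)^{1/2}\|_2 \le \epsilon$ is forced below the Theorem~\ref{Theo: robustness of SPA} threshold for all $r \ge 2$.

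The main obstacle I anticipate is the bookkeeping in (ii): one must bound $\epsilon$ in terms of $\|\N\|_2$, which requires controlling $\|(\L^*)^{1/2}\|_2$, and this in turn couples back to $\sigma_{\mmin}(\G)$ (hence to $\sigma_{\mmin}(\F)$ and $\|\N\|_2$) and to the absolute bounds from Corollary~\ref{Coro: bounds for singular values for transG}. Concretely, from $\|\s_i\|_2 \le 4\|\N\|_2$ (Lemma~\ref{Lemm: bound for s}) and $\|\s_i^\circ\|_2 \le \|(\L^*)^{1/2}\|_2\|\s_i\|_2$, with $\|(\L^*)^{1/2}\|_2 = \sigma_{\mmax}((\L^*)^{1/2}) \le \sigma_{\mmax}(\G^\circ)/\sigma_{\mmin}(\G)$, one obtains $\epsilon \le \frac{4\sigma_{\mmax}(\G^\circ)}{\sigma_{\mmin}(\G)}\|\N\|_2$, and then Lemma~\ref{Lemm: inequalities about F, G and s}(a) converts $\sigma_{\mmin}(\G)$ to $\sigma_{\mmin}(\F)$. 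Substituting the hypothesis $\|\N\|_2 \le \sigma_{\mmin}(\F)/(1225\sqrt{r})$ and the absolute bounds on $\sigma_{\mmax}(\G^\circ), \kappa(\G^\circ)$, I would check that the resulting $\epsilon$ satisfies the threshold in (ii) for every $r \ge 2$ — the worst case being a trade-off between the $\frac{1}{2\sqrt{r-1}}$ factor (worst for large $r$) and the $\sqrt{r}$ in the denominator of the noise bound (which helps for large $r$), so that in fact $r=2$ is again the binding case, consistent with Remark~\ref{Remark: difference in proof}. Once both hypotheses are confirmed, Theorem~\ref{Theo: robustness of SPA} applied to $(\P^\circ, r)$ yields directly the claimed per-column bound, completing the proof.
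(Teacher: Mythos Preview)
Your proposal is correct and follows essentially the same route as the paper: verify Assumption~\ref{Assump}(a) for $\G^\circ$, bound $\|\s_i^\circ\|_2$ via the factorization $\|(\L^*)^{1/2}\G\|_2\|\G^{-1}\|_2\|\s_i\|_2$ together with Corollary~\ref{Coro: bounds for singular values for transG} and Lemma~\ref{Lemm: inequalities about F, G and s}, and then check the resulting numerical inequality for $\alpha = 1225$ so that Theorem~\ref{Theo: robustness of SPA} applies to $\P^\circ$. The only place the paper is more explicit than your sketch is the verification that the inequality holds for all $r\ge 2$: rather than asserting $r=2$ is binding, it splits into $2 \le r \le 5$ (where $\min(\cdot)=\tfrac14$, worst at $r=2$) and $r \ge 6$ (where the right-hand side $(\alpha\sqrt{r}-2)/(2\sqrt{r-1})$ is minimized at $r=\alpha^2/4$, with value $\tfrac12\sqrt{\alpha^2-4}>\tfrac12\alpha-\tfrac12$), and then checks numerically that $324b<\tfrac{\sqrt{2}}{4}\alpha-\tfrac12$ for $\alpha=1225$, which covers both regimes.
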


\begin{proof}
 First, we show that a separable matrix $\G^\circ(\I,\K)\Pib$
 in $\P^\circ$ satisfies Assumption~\ref{Assump}.
 From Corollary \ref{Coro: bounds for singular values for transG},
 we have $\sigma_{\mmin}(\G^\circ) \ge (1 - \sqrt{1-a})^{1/2} > 1$.
 The last inequality strictly holds,
 since the value of $a$, which is of (\ref{Eq: value of a}), is in $0 < a < 1$ 
 due to $\alpha=1225$.
 Thus, we see that (a) of Assumption \ref{Assump} is satisfied.
 Furthermore, (b) of Assumption \ref{Assump} is satisfied
 since the assumption is put on the $\K$ of $\wt{\A}$.

 Next, we show that the size of the noise matrix $(\0, \S^\circ)\Pib$ in $\P^\circ$ 
 is within the range allowed by Theorem \ref{Theo: robustness of SPA}.
 Namely, we show that the inequality 
 \begin{equation} \label{Eq: inequality about allowed noise range}
  \|\s_i^\circ\|_2 < 
   \min \Biggl( \frac{1}{2\sqrt{r-1}}, \frac{1}{4} \Biggr)
   \frac{\sigma_{\mmin}(\G^\circ)}{1 + 80 \kappa(\G^\circ)^2}
 \end{equation}
 holds for $i = 1, \ldots, m-r$.
 We derive the upper bound on the left-side value 
 and the lower bound on the right-side value.
 The left-side value is bounded such that
 \begin{eqnarray*}
  \|\s_i^\circ\|_2 
   &=&  \|(\L^*)^{1/2}\s_i\|_2   \\
   &=&  \|(\L^*)^{1/2} \G \G^{-1} \s_i\|_2   \\
   &\le& \|(\L^*)^{1/2} \G\|_2 \|\G^{-1}\|_2 \|\s_i\|_2   \\
   &\le& \frac{4 (1 + \sqrt{1-a})^{1/2}}{\alpha \sqrt{r} -2}.
 \end{eqnarray*}
 The last inequality follows from Corollary \ref{Coro: bounds for singular values for transG} 
 and Lemma \ref{Lemm: inequalities about F, G and s}.
 The right-side value is bounded such that
 \begin{eqnarray*}
  \min \Biggl( \frac{1}{2\sqrt{r-1}}, \frac{1}{4} \Biggr)
   \frac{\sigma_{\mmin}(\G^\circ)}{1 + 80 \kappa(\G^\circ)^2} 
   &\ge&
  \min \Biggl( \frac{1}{2\sqrt{r-1}}, \frac{1}{4} \Biggr)
  \frac{(1-\sqrt{1-a})^{3/2}}{81+79\sqrt{1-a}} \\
   &>&
    \frac{1}{81}  \min \Biggl( \frac{1}{2\sqrt{r-1}}, \frac{1}{4} \Biggr)
  \frac{(1-\sqrt{1-a})^{3/2}}{1+\sqrt{1-a}}.
 \end{eqnarray*}
 The first inequality follows from Corollary \ref{Coro: bounds for singular values for transG}.
 Therefore, the inequality 
\begin{eqnarray} \label{Eq: relax inequality about allowed noise range}
 & & \frac{4 (1 + \sqrt{1-a} )^{1/2}}{\alpha \sqrt{r} -2}
  <
  \frac{1}{81} \min \Biggl( \frac{1}{2\sqrt{r-1}}, \frac{1}{4} \Biggr)
 \frac{(1-\sqrt{1-a})^{3/2}}{1+\sqrt{1-a}}  \nonumber \\
 \equivSym 
  & & 324b
  < 
  \min \Biggl( \frac{1}{2\sqrt{r-1}}, \frac{1}{4} \Biggr)(\alpha \sqrt{r}-2)
\end{eqnarray} 
 implies that of (\ref{Eq: inequality about allowed noise range}).
 Here, we denote 
 \begin{equation*}
  b = \Biggl( \frac{1 + \sqrt{1-a}}{1 - \sqrt{1-a}} \Biggr)^{3/2}.
 \end{equation*}
 Note that the value of $b$ is determined by $\alpha$, 
 since $a$ is given as (\ref{Eq: value of a}).

 We show that inequality 
 (\ref{Eq: relax inequality about allowed noise range}) 
 holds for any $r \ge 2$ when $\alpha = 1225$.
 In the case of $2 \le r \le 5$, it is sufficient to show 
 \begin{equation} \label{Eq: r <= 5}
  324b < \frac{\sqrt{2}}{4} \alpha- \frac{1}{2}.
 \end{equation}
 In the case of $r \ge 6$, the inequality becomes 
 \begin{equation} \label{Eq: r >= 6}
  324b < \frac{\alpha \sqrt{r}-2}{2 \sqrt{r-1}}.
 \end{equation}
 Let $f(x) = \frac{\alpha \sqrt{x}-2}{2 \sqrt{x-1}}$ for $x \ge 2$.
 The function $f$ attains its minimum at $x = \alpha^2 / 4$. 
 Thus, we can put a lower bound on the right-side value.
 \begin{eqnarray*}
  \frac{\alpha \sqrt{r}-2}{2 \sqrt{r-1}} 
  &\ge& f(\alpha^2 / 4) \\
  &=& \frac{1}{2}\sqrt{\alpha^2-4} \\
  &>& \frac{1}{2}\alpha - \frac{1}{2}.
 \end{eqnarray*}
 Accordingly, we see that inequality (\ref{Eq: r <= 5}) implies 
 that of (\ref{Eq: r >= 6}).
 For inequality (\ref{Eq: r <= 5}) with $\alpha = 1225$, 
 we have the relation 
 \begin{equation*}
  324b < 432.4 < 432.6
   < \frac{\sqrt{2}}{4} \alpha- \frac{1}{2}.
 \end{equation*}
 Therefore, inequality (\ref{Eq: relax inequality about allowed noise range})
 holds for any $r \ge 2$ when $\alpha = 1225$.
 This leads us to conclude that
 inequality (\ref{Eq: inequality about allowed noise range}) 
 holds for $i = 1, \ldots, m-r$.

\end{proof} 

Let us remark on the choice of  $\alpha = 1225$ in the proposition.
\begin{remark}
 The value of $b$ in (\ref{Eq: r <= 5})
 is given by the composite function $f$ in $x$ such that 
  $f = f_2 \circ f_1$ with 
 \begin{equation*}
  f_1(x) = \Biggl(\frac{\sqrt{2} x - 2}{\sqrt{2} x + 2}\Biggr)^4 \ \mbox{and} \
  f_2(x) = \Biggl( \frac{1 + \sqrt{1-x}}{1 - \sqrt{1-x}} \Biggr)^{3/2}.
 \end{equation*}
 The function $f$ is monotonically decreasing for $x \ge 2$, 
 and the function value approaches to $1$ as $x$ goes to infinity.
 Thus, $\alpha = 1225$ is the minimum integer 
 that satisfies the inequality (\ref{Eq: r <= 5}).
\end{remark}

In Proposition \ref{Prop: robustness in transformed space}, 
we evaluated the size of the basis error due to Algorithm \ref{Alg: precond SPA} 
in terms of $\P^\circ$.
The proof of Theorem~\ref{Theo: main} can be obtained by rewriting it 
as $\wt{\A}$ instead of $\P^\circ$.

\begin{proof}[(Proof of Theorem~\ref{Theo: main})] 
 The theorem supposes that $r \ge 2$ and $\A$ satisfies Assumption \ref{Assump}.
 Therefore, Proposition \ref{Prop: robustness in transformed space} tells us that,
 if $\|\N\|_2  \le \sigma_{\mmin}(\F) / \alpha \sqrt{r} $ and $\alpha = 1225$,
 then, Algorithm \ref{Alg: precond SPA} with the input $(\wt{\A}, r)$ returns
 the output $\IC$ such that there is an order of the elements in $\IC$ satisfying
 \begin{equation} \label{Eq: error of p and g in transformed space}
  \|\p^\circ_{\IC(j)} - \g^\circ_j\|_2  \le ( 80 \kappa(\G^\circ)^2 + 1 ) \epsilon
 \end{equation}
 for all $j = 1, \ldots, r$ where $\epsilon$ satisfies
 \begin{equation} \label{Eq: bound for noise in transformed space}
  \|\s_i^\circ\|_2 \le  \epsilon.
 \end{equation}
 The norm of $\s_i^\circ$ is bounded from above by using $\|\N\|_2$ such that 
 \begin{eqnarray*}
  \|\s_i^\circ\|_2 
   &=&
   \|(\L^*)^{1/2} \s_i\|_2  \\
   &\le&
   \|(\L^*)^{1/2}\G\|_2 \|\G^{-1}\|_2  \|\s_i\|_2  \\
   &\le&
    \frac{ 4 (1 + \sqrt{1-a})^{1/2}}{\sigma_{\mmin}(\G)}\|\N\|_2.
 \end{eqnarray*}
 The last inequality follows from Corollary \ref{Coro: bounds for singular values for transG} and 
 Lemma \ref{Lemm: bound for s}.
 We choose $\epsilon$ as 
 \begin{equation} \label{Eq: prime epsilon}
  \epsilon = \frac{ 4 (1 + \sqrt{1-a})^{1/2} }{\sigma_{\mmin}(\G)}\|\N\|_2.
 \end{equation}
 By this choice, the inequality (\ref{Eq: bound for noise in transformed space}) 
 is satisfied for all $i = 1, \ldots, m-r$.
 In what follows,
 we shall use $k$ to denote $\IC(j)$
 in (\ref{Eq: error of p and g in transformed space}) for simplicity.
 For the left-side of (\ref{Eq: error of p and g in transformed space}), we have 
 \begin{eqnarray*}
  \|\p^\circ_k - \g^\circ_j\|_2  
  &=& 
  \|(\L^*)^{1/2} (\p_k - \g_j)\|_2 \\
  &\ge&
   \sigma_{\mmin}((\L^*)^{1/2}) \|\p_k - \g_j\|_2 \\
  &\ge&
  \sigma_{\mmin}((\L^*)^{1/2}\G) \sigma_{\mmin}(\G^{-1}) \|\p_k - \g_j\|_2 \\
  &\ge&
   \frac{(1-\sqrt{1-a})^{1/2}}{\sigma_{\mmax}(\G)} \|\p_k - \g_j\|_2.
 \end{eqnarray*}
 The last inequality follows from  Corollary \ref{Coro: bounds for singular values for transG}.
 For the right side of (\ref{Eq: error of p and g in transformed space}), we have, 
 from Corollary \ref{Coro: bounds for singular values for transG} and 
 the choice of $\epsilon$ such as (\ref{Eq: prime epsilon}),
 \begin{eqnarray*}
  (1 + 80 \kappa(\G^\circ)^2) \epsilon
   &\le& 
   \Biggl(1 + \frac{80(1 + \sqrt{1-a})}{1 - \sqrt{1-a}} \Biggr)
   \frac{ 4 (1 + \sqrt{1-a})^{1/2} }{\sigma_{\mmin}(\G)} \|\N\|_2.
 \end{eqnarray*}
 Accordingly, 
 \begin{eqnarray}
  & & \frac{(1-\sqrt{1-a})^{1/2}}{\sigma_{\mmax}(\G)} \|\p_k - \g_j\|_2
   \le
   \Biggl(1 + \frac{80(1 + \sqrt{1-a})}{1 - \sqrt{1-a}} \Biggr)
   \frac{ 4 (1 + \sqrt{1-a})^{1/2} }{\sigma_{\mmin}(\G)} \|\N\|_2 \nonumber \\
  &\equivSym& 
   \|\p_k - \g_j\|_2 
   \le 
   \frac{4(81 + 79 \sqrt{1-a})(1 + \sqrt{1-a})^{1/2}}{(1 - \sqrt{1-a})^{3/2}} 
   \kappa(\G)\|\N\|_2. \label{Eq: error of p and g}
 \end{eqnarray}
 By Lemma \ref{Lemm: inequalities about F, G and s},
 the condition number of $\G$ is bounded from above by using that of $\F$ such that
 \begin{eqnarray*}
  \kappa(\G) = \frac{\sigma_{\mmax}(\G)}{\sigma_{\mmin}(\G)} 
   &\le& 
   \frac{\alpha \sqrt{r}}{\alpha\sqrt{r} - 2} \kappa(\F) + \frac{2}{\alpha\sqrt{r} - 2} \\
  &=&
   \frac{\alpha \sqrt{r}}{\alpha\sqrt{r} - 2} (\kappa(\F) + 1) - 1.
 \end{eqnarray*}
 Here, we consider the function $f(x) = \frac{\alpha \sqrt{x}}{\alpha\sqrt{x} - 2}$ 
 for $x \ge 2$.
 Since the function is monotonically decreasing for $x \ge 2$,
 we have 
 \begin{eqnarray*}
  \kappa(\G)
   \le 
   \frac{\alpha \sqrt{2}}{\alpha\sqrt{2} - 2} \kappa(\F) + 
   \frac{2}{\alpha\sqrt{2} - 2}.
 \end{eqnarray*}
 By using this inequality, we replace $\kappa(\G)$ in (\ref{Eq: error of p and g})
 with $\kappa(\F)$, and then 
 \begin{equation*}
   \|\p_k - \g_j\|_2 
    \le 
   \frac{4(81 + 79 \sqrt{1-a})(1 + \sqrt{1-a})^{1/2} }{(1 - \sqrt{1-a})^{3/2}} 
   \Biggl(\frac{\alpha \sqrt{2}}{\alpha\sqrt{2} - 2} \kappa(\F) 
   +  \frac{2}{\alpha\sqrt{2} - 2}\Biggl)\|\N\|_2.
 \end{equation*}
 Since $\alpha = 1225$ and $a$ is of the form (\ref{Eq: value of a}),
 the above inequality implies that 
 $\|\p_k - \g_j \|_2 \le (432\kappa(\F) + 1) \|\N\|_2$.
 By Lemma \ref{Lemm: relation of basis errors in original and transformed space},
 this inequality leads to 
 $\|\wt{\a}_k - \f_j \|_2 \le (432\kappa(\F) + 4) \|\N\|_2$.
\end{proof}

\section*{Acknowledgments}
The author would like thank Nicolas Gillis of 
Universit\'{e} de Mons who provided comments and suggestions 
on the draft version of this paper.
We also thank the anonymous referee for insightful comments
that enhanced the quality of this paper significantly.
This research was supported by the Japan Society for 
the Promotion of Science (JSPS KAKENHI Grant Number 15K20986, 26242027).

\bibliographystyle{plain}
\bibliography{main}

\begin{thebibliography}{10}

\bibitem{Ara01}
U.~M.~C. Ara\'ujo, B.~T.~C. Saldanha, R.~K.~H. Galv{\~a}o, T.~Yoneyama, H.~C.
  Chame, and V.~Visani.
\newblock The successive projections algorithm for variable selection in
  spectroscopic multicomponent analysis.
\newblock {\em Chemometrics and Intelligent Laboratory Systems}, 57(2):65--73,
  2001.

\bibitem{Aro13}
S.~Arora, R.~Ge, Y.~Halpern, D.~Mimno, and A.~Moitra.
\newblock A practical algorithm for topic modeling with provable guarantees.
\newblock In {\em Proceedings of the 30th International Conference on Machine
  Learning (ICML)}, 2013.

\bibitem{Aro12a}
S.~Arora, R.~Ge, R.~Kannan, and A.~Moitra.
\newblock Computing a nonnegative matrix factorization -- {P}rovably.
\newblock In {\em Proceedings of the 44th symposium on Theory of Computing
  (STOC)}, pages 145--162, 2012.

\bibitem{Aro12b}
S.~Arora, R.~Ge, and A.~Moitra.
\newblock Learning topic models -- {G}oing beyond {SVD}.
\newblock In {\em Proceedings of the 2012 IEEE 53rd Annual Symposium on
  Foundations of Computer Science (FOCS)}, pages 1--10, 2012.

\bibitem{Bus65}
P.~Businger and G.~H. Golub.
\newblock Linear least squares solutions by {H}ouseholder transformations.
\newblock {\em Numerische Mathematik}, 7(3):269--276, 1965.

\bibitem{Gil15b}
N.~Gillis and W.~{-}K. Ma.
\newblock Enhancing pure-pixel identification performance via preconditioning.
\newblock {\em SIAM Journal on Imaging Sciences}, 8(2):1161--1186, 2015.

\bibitem{Gil14}
N.~Gillis and S.~A. Vavasis.
\newblock Fast and robust recursive algorithms for separable nonnegative matrix
  factorization.
\newblock {\em IEEE Transactions on Pattern Analysis and Machine Intelligence},
  36(4):698--714, 2014.

\bibitem{Gil15}
N.~Gillis and S.~A. Vavasis.
\newblock Semidefinite programming based preconditioning for more robust
  near-separable nonnegative matrix factorization.
\newblock {\em SIAM Journal on Optimization}, 25(1):677--698, 2015.

\bibitem{Gol96}
G.~H. Golub and C.~F.~Van Loan.
\newblock {\em Matrix Computation}.
\newblock The Johns Hopkins University Press, 3rd edition, 1996.

\bibitem{Kha96}
L.~G. Khachiyan.
\newblock Rounding of polytopes in the real number model of computation.
\newblock {\em Mathematics of Operations Research}, 21(2):307--320, 1996.

\bibitem{Miz14}
T.~Mizutani.
\newblock Ellipsoidal rounding for nonnegative matrix factorization under noisy
  separability.
\newblock {\em Journal of Machine Learning Research}, 15:1011--1039, 2014.

\bibitem{Sun04}
P.~Sun and R.~M. Freund.
\newblock Computation of minimum-volume covering ellipsoids.
\newblock {\em Operations Research}, 52(5):690--706, 2004.

\bibitem{Vav09}
S.~A. Vavasis.
\newblock On the complexity of nonnegative matrix factorization.
\newblock {\em SIAM Journal of Optimization}, 20(3):1364--1377, 2009.

\end{thebibliography}

\end{document}